\newcommand{\sys}{\mbox{\textsc{CCNF}}\xspace}
\newcommand{\TODO}[1]{\textcolor{Melon}{TODO: #1}}
\newcommand{\MX}[1]{\textcolor{Magenta}{[MX: #1]}}
\newcommand{\PP}[1]{
\vspace{2px}
\noindent{\bf \IfEndWith{#1}{.}{#1}{#1.}}
}
\newcommand{\cc}[1]{\mbox{\smaller[0.5]\texttt{#1}}}
\newtheorem{theorem}{Theorem}[section]
\newtheorem{lemma}[theorem]{Lemma}
\newcolumntype{L}{>{\raggedright\arraybackslash}X}
\newcommand*\WC[1]{%
\begin{tikzpicture}[baseline=(C.base)]
\node[draw,circle,inner sep=0.2pt](C) {#1};
\end{tikzpicture}}
\title{Causally Consistent Normalizing Flow}
\author {
    Qingyang Zhou\textsuperscript{\rm 1},
    Kangjie Lu\textsuperscript{\rm 2},
    Meng Xu\textsuperscript{\rm 1}
}
\begin{document}

\maketitle
\begin{abstract}
Causal inconsistency arises when the underlying causal graphs captured by generative models like \textit{Normalizing Flows} (NFs) are inconsistent with those specified in causal models like \textit{Struct Causal Models} (SCMs).
This inconsistency can cause
unwanted issues including the unfairness problem.
Prior works to achieve causal consistency inevitably compromise the expressiveness of their models
by disallowing hidden layers.
In this work, we introduce a new approach: \textbf{C}ausally \textbf{C}onsistent \textbf{N}ormalizing \textbf{F}low (\sys).
To the best of our knowledge, 
\sys is the first causally consistent generative model 
that can approximate any distribution with multiple layers.
\sys relies on two novel constructs:
a sequential representation of SCMs and
partial causal transformations.
These constructs allow \sys
to inherently maintain causal consistency
without sacrificing expressiveness.
\sys can handle all forms of causal inference tasks, 
including interventions and counterfactuals.
Through experiments,
we show that \sys outperforms current approaches 
in causal inference.
We also empirically validate the practical utility of \sys 
by applying it to real-world datasets
and show how \sys
addresses challenges like unfairness effectively.
\end{abstract}

\section{Introduction}
\label{s:intro}

Causal generative modeling is generative models (GMs) that utilize
given causal models like structure causal models (SCMs)
for data generation{\small ~\cite{vaefuture}}.
It has been widely researched on 
different types of GMs
like VAE{\small~\cite{causalvae}}, 
GAN{\small~\cite{causalgan}}, 
Normalizing Flow (NF){\small~\cite{causalnf}} 
and Diffusion Model{\small~\cite{difscm}}.

However, most approaches have a problem that they can only approximate 
the causality relations
instead of \emph{enforcing} the consistency
between the causal graph induced by GMs and
the causal graph in given SCMs.
The problem is called 
casual inconsistency problem in prior works{\small~\cite{causalnf}}
and will be discussed in detail in Section~\ref{s:background}.
This could lead to critical societal issues 
(e.g. the one in Section~\ref{s:moti}),
which have yet to be adequately addressed.

\begin{figure*}[!t]
    \centering
    \subfloat[causal NF]{
       \includegraphics[width=2.5in]{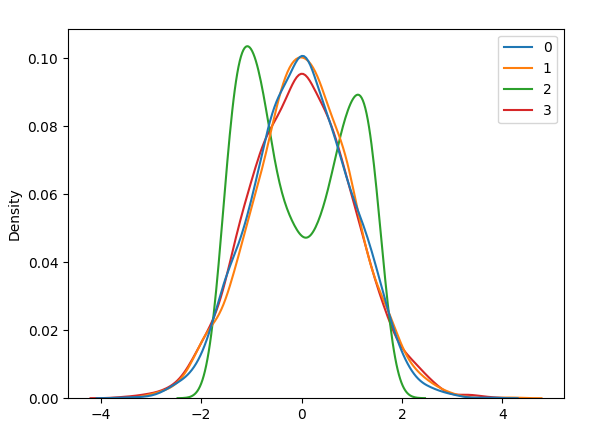}
       \label{fig:prior.a}
    }
    \hfil
    \subfloat[\sys]{
        \includegraphics[width=2.5in]{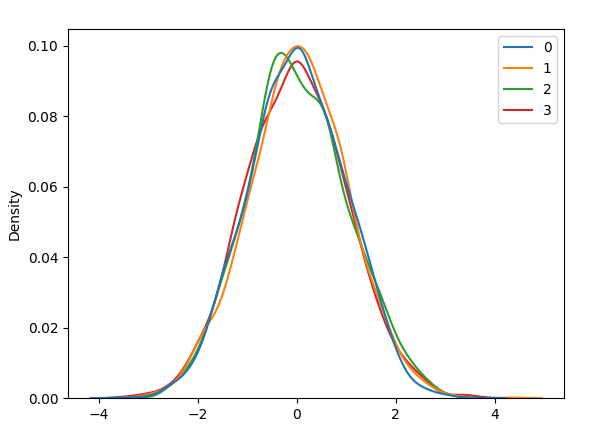}
      \label{fig:prior.b}
    }
    \caption{Prior distributions of causal NF and \sys}
    \label{fig:prior}
\end{figure*}

Fortunately, 
recent works have proposed GMs 
that are causally consistent by design.
%
Causal NF{\small~\cite{causalnf}} and VACA{\small~\cite{vaca}}
ensure causal consistency
by limiting their models to minimal structural complexity.
%
More specifically,
Causal NF restricts the model depth to zero,
i.e., eliminating middle layers in NF;
%
while
VACA applies a similar restriction to its encoder structure.
%
In other words,
causal consistency is guaranteed at the expense of
the utility of these models---the ability to approximate
any arbitrarily complex distributions of observations.
%
%
For instance,
the training objective of Causal NF is 
to minimize the discrepancy
between the distributions of latent variables 
and the pre-selected distributions by users
(e.g. Gaussian).
%
However, as shown in Figure~\ref{fig:prior.a}, 
a Causal NF trained on a nonlinear Simpson dataset
is not able to accomplish the objective.
The distribution of the third latent variable, 
highlighted in green in Figure~\ref{fig:prior.a}, 
deviates significantly from the Gaussian distribution.
%
%

%
In this paper,
we introduce
\textbf{C}ausally \textbf{C}onsistent \textbf{N}ormalizing \textbf{F}low, 
abbreviated as \sys,
that is a causally consistent GM by design
without sacrificing utility,
i.e., approximating arbitrarily complex
distributions based on universal approximation theorems
(details in Theorem~\ref{universality}).
%
A key innovation of \sys is
to translate an SCM
into a sequence 
(details in Secion~\ref{s:design}).
%
%
%
%
The sequential representation of an SCM
eliminates the constraints of
maximum layer depths
without compromising causal consistency,
enabling a more flexible model architecture in \sys
(details in Section~\ref{s:operations}).
%
%
%
%
Subsequently,
\sys employs normalizing flows 
with partial causal transformations 
to effectively capture the causality in the data, 
which is further elaborated in Section~\ref{s:design} and Section~\ref{s:operations}.

We demonstrate that \sys 
is inherently causally consistent
and capable of performing causal inference tasks
such as interventions and counterfactuals.
%
To the best of our knowledge,
\sys is the first causally consistent GM that can
approximate any distributions of observation variables
across multiple layers.
In comparison,
\sys outperforms existing models like Causal NF
in similar tasks,
as shown in~\ref{fig:prior.b}.
Additionally, 
\sys proves effective in real-world applications,
addressing significant issues such as unfairness.
%
Applying \sys to the German credit dataset{\small~\cite{german}}, 
we observe notable improvements:
a reduction in individual unfairness from 9.00\% to 0.00\%,
and an increase in overall accuracy from 73.00\% to 75.80\%.
%

\PP{Summary}
This paper makes the following contributions:
\begin{itemize}[align=left,leftmargin=*]
  \item We propose a new sequential representation for SCMs,
  and formally prove its ability
  to maintain the causal consistency.
  
  \item Utilizing this sequential representation
  alongside partial causal transformations,
  we develop \sys, 
  a GM that guarantees causal consistency
  and excels at complex causal inference tasks.
  %

  \item We empirically validate \sys,
  demonstrating it outperforms state-of-the-art
  casually consistent GMs
  on causal inference benchmarks. 
  Furthermore,
  our real-world case study showcase
  the potential of \sys in addressing critical issues like unfairness.
  \sys is open-sourced in the code link.
\end{itemize}

\section{A Motivating Example}
\label{s:moti}

To articulate the  
causal inconsistency problem between GMs and SCMs,
we present 
a motivating example modeling a simplified admission system.

\PP{A simplified admission system}
The causal graph of
a simplified admission system \(\mathcal{M}\)
is shown in Figure~\ref{fig:admission}.
It consists of four attributes:
\cc{gender},
\cc{age},
\cc{score}, and
admission \cc{decision}.
In terms of casualty,
\cc{gender} and \cc{age} determine the distribution of \cc{score},
and ideally, \cc{score} solely determines
the distribution admission \cc{decision},
ensuring that \cc{gender} does not
(and should not)
directly affect admission.
%

To illustrate,
we assume the observations \(\mathbf{O}\)
of this admission system \(\mathcal{M}\)
can be generated
by the equations in Table~\ref{tab:comp_equation} under 
the SCM column.
Here,
the value of independent variables $u_i$
where $i \in \{g, a, s, d\}$
are randomly sampled from predefined distributions.
For instance,
\(u_g\) is sampled from a distribution
where gender is distributed equally.
With distinct samples for $u_i$ where $i \in \{g, a, s, d\}$,
this system can generate unique observations representing \(\mathbf{O}\).

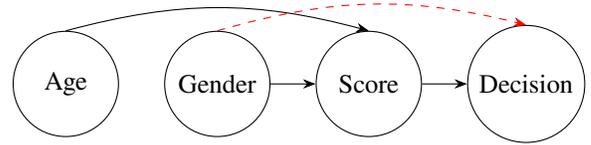
\begin{figure}[!t]
    \centering
    \begin{tikzpicture} [
node/.style = {circle, draw, minimum size = 1.4cm},
node distance = 0.6cm 
]

\node[node] (age) {Age};
\node[node] (gender) [right=of age] {Gender};
\node[node] (score) [right=of gender] {Score};
\node[node] (decision) [right=of score] {Decision};

\draw[-Stealth] (gender.east) -- (score.west);
\draw[-Stealth] (score.east) -- (decision.west);
\draw[-Stealth, bend left = 15] (age.north) to (score.north)[above];
\draw[red, dashed, -Stealth, bend left = 15] (gender.north) to (decision.north)[above];
\end{tikzpicture}
    \caption{The causal graph of an SCM
    describing an admission system
    with direct causalities that are intended (black solid line) and
    forbidden (red dashed line)}
    \label{fig:admission}
\end{figure}

\begin{table}[ht]
    \centering
    \setlength{\tabcolsep}{1mm}
    {\small
    \begin{tabular}{c|cc}
    \toprule
    Variable & SCM & GM \\
    \midrule
    \cc{gender} & $u_g$ & $u_g$ \\
    \cc{age} & $u_a$ & $u_a$ \\
    \cc{score} &  $u_s$+$\cc{gender}$+$\cc{age}$ &  $u_s$-$\cc{gender}$+$\cc{age}$ \\
    \cc{decision} & $f(u_{d}$+$\cc{score})$ & \makecell{$f(u_d$+$\cc{score}$+2*$\cc{gender})$}  \\
    \bottomrule
    \end{tabular} 
    }
    \caption{Comparison of generative equations between the actual SCM and GM. The function $f$ in the table stands for the $sign$ function.}
    \label{tab:comp_equation}
\end{table}

\PP{GMs based on the admission system}
In this scenario, \(\mathcal{GM}\)s learned from the observations \(\mathbf{O}\)
offer extensive capabilities{~\small~\cite{gmsurvey}}. 
However,
a \(\mathcal{GM}\) may establish incorrect causal links,
such as between \cc{gender} and admission \cc{decision}, 
as indicated by the red dashed line in Figure~\ref{fig:admission}.
%
%
This incorrect causality suggests that
in \(\mathcal{GM}\), 
different values of \cc{gender} could lead to 
varied distributions of admission \cc{decision}, 
even if the \cc{score} is identical.

Consider the scenario 
where the underlying causal relationships of
a \(\mathcal{GM}\) are formulated as in the GM column of the Table~\ref{tab:comp_equation}.
We can verify that 
the data distributions generated by the \(\mathcal{GM}\) is
indistinguishable from those of \(\mathcal{M}\),
despite in distinct functional forms.
%
However,
\(\mathcal{GM}\) and \(\mathcal{M}\) are not
causally consistent, which 
could result in significant consequences.

\PP{Unfairness due to causal inconsistency}
In this example,
causal inconsistency
could cause an unfairness problem.
%
Consider a scenario where users generate data instances
with the same $\cc{score}$.
In the origin system \(\mathcal{M}\),
as \cc{gender} has no direct impact on admission \cc{decision},
users will observe that 
changing the gender attribute 
does not alter the distribution of admission \cc{decision}.
However,
in the causally inconsistent model \(\mathcal{GM}\),
users will surprisingly observe that
different values of \cc{gender} could lead to
different distributions of \cc{decision},
even with the same score.
Such gender bias could lead to intense social debates
and the organization deploying the \(\mathcal{GM}\)
might face legal challenges{\small ~\cite{facebook}}.

\begin{figure}[!t]
    \centering
    \subfloat[Consistent] {
       \begin{tikzpicture} [
    node distance=0.4cm, 
    every matrix/.style = {
        matrix of nodes,
        nodes={
            circle,
            minimum size = 0.9cm
        }, 
        row sep=0.1cm
    }
]
    \matrix (m1) {
        |[draw]| $U_g$ \\
        |[draw]| $U_a$ \\
        |[draw]| $U_s$ \\
        |[draw]| $U_d$ \\
    };
    \node[above] at (m1.north) {$U$};

    \matrix (m2) [right = of m1] {
        |[draw]| $G$ \\
        |[draw]| $A$ \\
        |[draw]| $S$ \\
        |[draw]| $D$ \\
    };
    \node[above] at (m2.north) {$X$};

    \draw[-] (m2-1-1) -- (m1-1-1);
    \draw[-] (m2-2-1) -- (m1-2-1);
    \draw[-] (m2-3-1) -- (m1-3-1);
    \draw[-] (m2-4-1) -- (m1-4-1);
    \draw[-] (m2-1-1) -- (m1-3-1);
    \draw[-] (m2-2-1) -- (m1-3-1);
    \draw[-] (m2-3-1) -- (m1-4-1);
\end{tikzpicture}
       \label{fig:priormoti.cc}
    }
    \hfil
    \subfloat[Inconsistent] {
       \begin{tikzpicture} [
    node distance=0.4cm, 
    every matrix/.style = {
        matrix of nodes,
        nodes={
            circle,
            minimum size = 0.9cm
        }, 
        row sep=0.1cm
    }
]
    \matrix (m1) {
        |[draw]| $U_g$ \\
        |[draw]| $U_a$ \\
        |[draw]| $U_s$ \\
        |[draw]| $U_d$ \\
    };
    \node[above] at (m1.north) {$U$};

    \matrix (m2) [right = of m1] {
        |[draw]| $M$ \\
        |[draw]| $M$ \\
        |[draw]| $M$ \\
        |[draw]| $M$ \\
    };
    \node[above] at (m2.north) {$M$};
     
    \matrix (m3) [right = of m2] {
        |[draw]| $G$ \\
        |[draw]| $A$ \\
        |[draw]| $S$ \\
        |[draw]| $D$ \\
    };
    \node[above] at (m3.north) {$X$};

    \draw[-] (m2-1-1) -- (m1-1-1);
    \draw[-] (m2-2-1) -- (m1-2-1);
    \draw[-] (m2-3-1) -- (m1-3-1);
    \draw[red, dashed, -] (m2-4-1) -- (m1-4-1);
    \draw[-] (m2-1-1) -- (m1-3-1);
    \draw[-] (m2-2-1) -- (m1-3-1);
    \draw[red, dashed, -] (m2-3-1) -- (m1-4-1);

    \draw[-] (m3-1-1) -- (m2-1-1);
    \draw[-] (m3-2-1) -- (m2-2-1);
    \draw[-] (m3-3-1) -- (m2-3-1);
    \draw[red, dashed, -] (m3-4-1) -- (m2-4-1);
    \draw[red, dashed, -] (m3-1-1) -- (m2-3-1);
    \draw[-] (m3-2-1) -- (m2-3-1);
    \draw[-] (m3-3-1) -- (m2-4-1);
\end{tikzpicture}
       \label{fig:priormoti.cic}
    }
    \caption{Causally consistent models and inconsistent models of prior works. \textit{G} = \textit{Gender}, \textit{A} = \textit{age}, \textit{S} = \textit{Score}, \textit{D} = \textit{Decisions}, \textit{M} stands for nodes in the middle layer.}
    \label{fig:priormoti}
\end{figure}
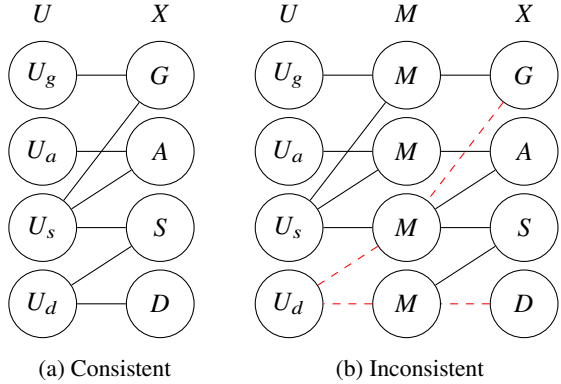

\PP{Prior works and their drawbacks}
Despite being applied to various architectures of GMs,
prior works on guaranteed causally consistent GMs
share a common intuition:
%
%
while each attribute in observation is influenced
only by its parent attributes
within the causal graph,
the causality relation must be captured
in the GM model \textbf{all at once},
which means that their layer depth must be zero.
%
Figure~\ref{fig:priormoti.cc} shows an example of this approach.
In this scenario,
if we maintain constant values for \cc{age} and \cc{score}
and only mutate the value of \cc{gender},
The distribution of
admission \cc{decision} should remain unaffected.

This intuition indeed ensures causal consistency,
but it has some shortcomings;
notably, it does not allow for
incorporation of any middle layers.
For example,
mutating the value of \cc{gender}
in Figure~\ref{fig:priormoti.cic}
results in a change of admission \cc{decision} 
due to the red dashed connections. 
%
This comparison between Figure~\ref{fig:priormoti.cic} 
and Figure~\ref{fig:priormoti.cc}
illustrates how causal consistency is compromised 
even with the introduction of a single layer in the middle,
while on the other hand,
forbidding middle layers
significantly impairs the model's capacity 
for learning.
%
However,
as we will demonstrate later,
\sys can maintain causal consistency
even with multiple middle layers.
Therefore \sys offers great learning ability 
compared to previous models, 
enhancing practical applicability.
\section{Preliminaries}
\label{s:background}

In this section,
we define basic concepts and related lemmas
to set the stage for \sys.
All definitions and lemmas introduced in this section
are consistent with prior works{\small~\cite{carefl, nf, dagnn}}.

\subsection{Structured Casual Model (SCM)}

\PP{Definition} 
A structural causal model (SCM) is a tuple \(\mathcal{M} = (\widetilde{\mathbf{f}}, P_\mathrm{u})\) commonly used to represent causality.
It describes the process that a set of \(d\) endogenous (observed) random variables \(\mathbf{X} = \{X_1, \cdots, X_d\}\) is generated from a corresponding set of exogenous (latent)
random variables \(\mathbf{U} = \{U_1, \cdots, U_d\}\) 
associated with a set of predefined
distributions \(P_\mathrm{u}\) and
a set of transfer functions \(\widetilde{\mathbf{f}}\).
Typically, \(\mathbf{X}\) and  \(\mathbf{U}\) have the same length, denoting as \(d\).
The generation of \(\mathbf{X}\) is governed by the equation:
\begin{equation}
    \label{eq:scm}
    X_i = \widetilde{f_i}(\mathbf{X}_{\mathbf{pa}_i}, U_i), \quad i \in \{1, \ldots, d\}
\end{equation}
where \(\mathbf{X}_{\mathbf{pa}_i}\) denotes a set of endogenous variables that \emph{directly} influences \(X_i\), i.e. the parents of \(X_i\).
Particularly, \(\mathbf{pa}_i\) represents the labels of those parent variables of \(X_i\).
%
%
%
%
Generally, we assume exogenous variables \(\mathbf{U}\) are mutually independent.

\PP{Graphical representation of an SCM: causal graph}
A causal graph \(\mathcal{G} = (\mathbf{V}, \mathbf{E})\) with \(|\mathbf{V}| = d\) nodes representing an SCM is a powerful tool to describe causality.
Each node in \(\mathbf{V}\) corresponds to an endogenous random variable \(X_i\), and
each edge in \(\mathbf{E}\) represents a causal link from a variable in \(\mathbf{X}_{\mathbf{pa}_i}\) to \(X_i\).
As a common assumption{\small~\cite{pearl}}, the causal graph \(\mathcal{G}\) is structured as a directed acyclic graph (DAG).
For convenience, each node \(V \in \mathbf{V}\) is assigned with a label \(i \in \mathbf{L} = \{1, \cdots, d\}\), and we denote it as \(V_i\).
For a subset of labels \(\mathbf{A} \subseteq \mathbf{L}\),
we define \(\mathbf{V_A} = \{V_i \mid i \in \mathbf{A} \}\).
See Figure~\ref{fig:topo} for an illustration.

\PP{Causal inference and causal hierarchy}
\textit{Causal inference} denotes the data generation process by SCM.
According to the \textit{causal hierarchy}{\small~\cite{pearl}},
the generative process is classified into three distinct tiers: 
\textit{observations},
\textit{interventions}, and
\textit{counterfactuals}.
The \textit{observations} process involves generating
\(\mathbf{X}\) unconditionally.
This is straightforward:
we generate \(\mathbf{U}\) sampled from 
the predefined distributions \(P_\mathrm{u}\) 
and then compute \(\mathbf{X}\) from \(\mathbf{U}\) by the formula in Equation~\ref{eq:scm}.
The \textit{interventions} process involves generating 
\(\mathbf{X}\) while setting the \(X_i\) 
to a specific value of \(a\),
often represented as \(Do(X_i = a)\).
This requires modifying the SCM 
such that every \(X_i\) in Equation~\ref{eq:scm} 
is replaced with \(a\) to create a new SCM:
\(\mathcal{M}_{Do(X_i = a)}\).
\(\mathbf{X}\) is generated by performing observations on the new SCM.
The \textit{counterfactuals} process considers a specific data instance \(\mathbf{X}\) where \(X_j = b\),
and aims to generate data instances 
supposing that \(X_j = b' \neq b\).
This process first deduces \(\mathbf{U}\) from \(\mathbf{X}\) 
via Equation~\ref{eq:scm},
then performs \(Do(X_j = b')\) to create a new SCM \(\mathcal{M}_{Do(X_j = b')}\),
Subsequently, the data is generated by
inputting \(\mathbf{U}\) into \(\mathcal{M}_{Do(X_i = a)}\).

\subsection{Causal Normalizing Flows}
\PP{Normalizing flows}
\textit{Normalizing flows} constitute a set of generative models that express the probability of observed variables \(\mathbf{X}\) from \(\mathbf{U}\) by change-of-variables rules.
Particularly, given \(\mathbf{X} = \{X_1, \cdots, X_d\}\) and \(\mathbf{U} = \{U_1, \cdots, U_d\}\), the probability of \(\mathbf{X}\) is expressed as follows:
\begin{gather}
    \mathbf{X} = T_{\theta}(\mathbf{U}), \quad where \   \mathbf{U} \sim P_\mathbf{U}\label{eq:trans} \\
    P_{\mathbf{X}}(\mathbf{X}) = P_{\mathbf{U}}(T^{-1}_{\theta}(\mathbf{X}))|\det{J_{T^{-1}_{\theta}}}(\mathbf{X})|
\end{gather}
Here, \(T_{\theta}\) represents a transformation that maps endogenous variables \(\mathbf{U}\) to exogenous variables \(\mathbf{X}\).
\(T_{\theta}\) could be any transformation as long as it is a partial derivative and invertible, often realized by a neural network with parameter \(\theta\).
It is common to chain different transformations \(T_{\theta_1} \cdots T_{\theta_k}\) to form a larger transformation \(T_{\theta} = T_{\theta_k} \circ \cdots T_{\theta_2} \circ T_{\theta_1}\).
\(P_{\mathbf{X}}\) denotes the probability of \(\mathbf{X}\) while \(P_{\mathbf{U}}\) denotes the probability of \(\mathbf{U}\), 
where \(P_{\mathbf{X}}\) is the target and \(P_{\mathbf{U}}\) usually follows a simple distribution.
The \(\det{J}\) means the Jacobian determinant of a given function, which is \(T^{-1}_{\theta}\) in this formula.

\PP{Multi-layer universal approximator}
An NF serves as a \textit{multi-layer universal approximator}, implying that any \(P_{\mathbf{X}}\) can be approximated by chaining a finite number of transformations.
Comparatively, the single-layer universal approximator can achieve the same objective with only one transformation.
The assumption that a NF is single-layer universal is stronger than the assumption that it is multi-layer.

Although an NF the theoretical capability is single-layer universal{\small~\cite{nf}},
No concrete NF succeeded in proving this.
Instead, 
a recent study on
the university of coupling-based NF 
proves 
that affine coupling flows like MAF{\small~\cite{maf}}
are multi-layer universal{\small~\cite{cnfuniversality}}.

\PP{Autoregressive normalizing flows}
\textit{Autoregressive normalizing flows} are a type of NFs whose transformations are defined as below:
given two random variables \(\mathbf{U} = \{U_1, \cdots, U_d\}\) and \(\mathbf{X} = \{X_1, \cdots, X_d\}\), the special transformation of Equation ~\ref{eq:trans} is:
\begin{equation}
    \label{eq:auto}
    X_i = T_{\theta}(U_i \mid \mathbf{X}_{<i}), \quad i \in \{1, \ldots, d\}
\end{equation}
Here, \(\mathbf{X}_{<i} = \{X_1, \cdots, X_{i-1}\}\).
This formula indicates that the parameter \(\theta\) in \(T_{\theta}(U_i \mid \mathbf{X}_{<i})\) is determined by \(\mathbf{X}_{<i}\), and the output value \(X_i\) is directly determined only by \(U_i\).
This transformation yields a simple Jacobin determinant since its Jacobin matrix is lower triangular.

\PP{Autoregressive flows and causality}
Although many works{\small~\cite{deepscm, imagepcm}} have utilized NFs for causal inference tasks
like counterfactual inference,
autoregressive flows and casualty were still
considered as two unrelated fields.
However, Ilyes noticed that 
it is possible to leverage autoregressive flows 
for causal tasks due to their intrinsic similarity{\small~\cite{carefl}}.
Particularly, to capture the causal relation between \(X_i\) and \(U_i\) accurately, 
autoregressive flows
possess a transformation as follows:
\begin{equation}
    \label{eq:causalux}
    X_i = T_{\theta}(U_i \mid \mathbf{X}_{\mathbf{pa}_i}), \quad i \in \{1, \ldots, d\}
\end{equation}
In contrast to Equation~\ref{eq:auto}, the parameter \(\theta\) in Equation~\ref{eq:causalux} is determined by \(\mathbf{X}_{\mathbf{pa}_i}\) rather than \(\mathbf{X}_{<i}\).

\subsection{Causal Consistency}
For any given SCM \(\mathcal{M}\) and 
the its casual graph \(\mathcal{G_M}\),
we call a \(\mathcal{GM}\) is 
causally consistent with \(\mathcal{M}\)
if the causal graph \(\mathcal{G_{GM}}\) 
induced by the \(\mathcal{GM}\)
is the same as \(\mathcal{G_M}\).
According to previous work,
the special \(\mathcal{GM}\)s exist
and can produce consistent result 
in all three tiers of causal hierarchy{\small~\cite{nerualcm}}.

Note that causal consistency only
require the \(\mathcal{M}\) and \(\mathcal{G_M}\)
to share the same causal graph.
In previous work{\small~\cite{indeterminacy}},
they have proved that 
such \(\mathcal{GM}\) and \(\mathcal{M}\) are identifiable,
which means
the data-generating process
between \(\mathcal{GM}\) and \(\mathcal{M}\)
only differs by 
an invertible component-wise transformation of the variables in \(\mathbf{U}\),
therefore \(\mathcal{GM}\) can perform
causal inference tasks just like \(\mathcal{M}\).

\subsection{Topological Batching}

\textit{Topological batching} results in an ordered sequence
\(\mathbf{B} = (\mathbf{B}_1, \cdots, \mathbf{B}_n)\) 
which partitions 
the label set \(\{1, \ldots, d\}\) of a DAG 
\(\mathcal{G} = (\mathbf{V}, \mathbf{E})\).
It provides a method to process \(\mathbf{V}\) sequentially
with a \emph{deterministic} ordering.
The algorithm of topological batching is outlined in the extended version.
In brief, nodes are organized by a topological sort, with each 
\(\mathbf{B}_i\) representing a topological equivalence class of 
\(\mathbf{V}\). 

Topological batching was initially introduced in previous work{\small~\cite{toplogic}} and refined with rigorous mathematical proof by Thost{\small~\cite{dagnn}}.
Since we assume a causal graph is a DAG, topological batching can be directly applied to SCMs. 
See Figure~\ref{fig:topo} for an illustration. 
In this example,
the label set is partitioned into an ordered sequence 
\(\mathbf{B}=(\{1, 2\}, \{3\}, \{4\})\).

\section{Causally Consistent Normalizing Flows}
\label{s:design}

\subsection{Definitions}

\PP{Sequential representation of an SCM}
Similar to the graph representation, 
\textit{The sequential representation of an SCM} 
entails describing the SCM by an ordered sequence.
Specifically, for a causal graph \(\mathcal{G}\) which is a DAG,
we can obtain an ordered sequence 
\(\mathbf{B} = (\mathbf{B}_1, \cdots, \mathbf{B}_n)\) 
by applying topological batching on 
$\mathcal{G}$ as discussed in Section~\ref{s:background}.
The sequence \(\mathbf{B}\) could be interpreted as
a sequential representation of the SCM.
For instance,
the sequential representation of Figure~\ref{fig:topo}
is \((\{1, 2\}, \{3\}, \{4\})\).

\PP{Partial causal transformation}
Assume we have two random variables: \(\mathbf{U} = \{U_1, \cdots, U_d\}\), 
\(\mathbf{X} = \{X_1, \cdots, X_d\}\), 
a label subset \(\mathbf{L} \subseteq \{1, \cdots, d\}\) and 
the parent node label set \({\mathbf{pa}_i}\) of each \(X_i\) 
as defined in Section~\ref{s:background}.
A \textit{partial causal transformation} \(T_\theta\) 
over the label set \(\mathbf{L}\) can be expressed as follows:
\begin{equation}
\label{eq:pcaf}
 X_i = 
\begin{cases}
    T_\theta(U_i \mid \mathbf{U}_{\mathbf{pa}_i}) & \forall i \in \mathbf{L} \\
    U_i & \forall i \notin \mathbf{L} 
\end{cases}    
\end{equation}
We call it "partial" because it only transfers \(U_i\) to \(X_i\) 
under the condition \(\mathbf{U}_{\mathbf{pa}_i}\) 
for any \(i \in \mathbf{L}\).
In the following section, we use \(T_{\theta}^\mathbf{L}\) 
to express the partial causal transformation 
\(T_\theta\) over the label set \(\mathbf{L}\).

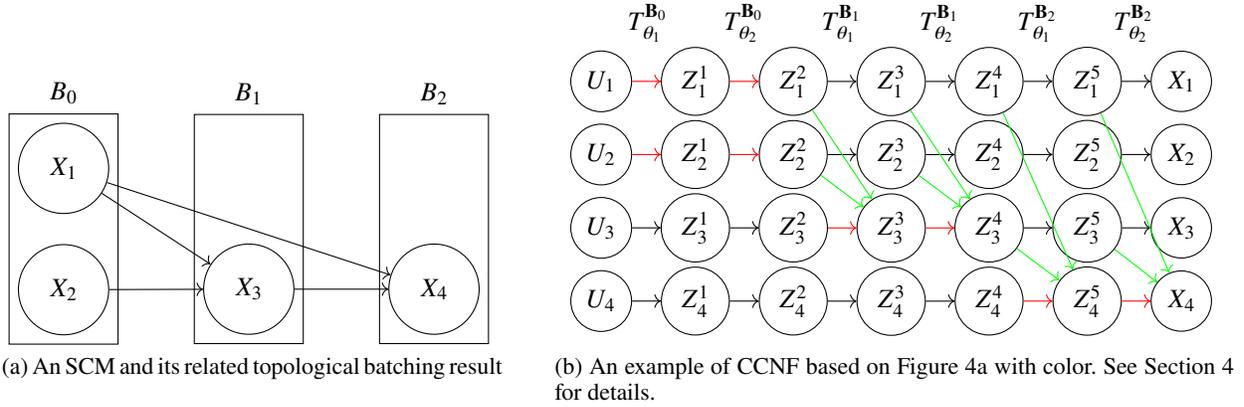
\begin{figure*}[ht]
    \centering
    \subfloat[An SCM and its related topological batching result] {
        \begin{tikzpicture} [
    node distance=1cm, 
    every matrix/.style = {
        matrix of nodes,
        draw, 
        nodes={
            circle,
            minimum size = 1.2cm
        }, 
        row sep=0.4cm
    }
]
    \matrix (m1) {
        |[draw]| $X_1$ \\
        |[draw]| $X_2$ \\
    };
    \node[above] at (m1.north) {$B_0$};
    
    \matrix (m2) [right = of m1]{
        ~ \\
       |[draw]| $X_3$ \\
    };
    \node[above] at (m2.north) {$B_1$};
     
    \matrix (m3) [right = of m2]{
        ~ \\
        |[draw]| $X_4$ \\
    };
    \node[above] at (m3.north) {$B_2$};

    \draw[->] (m1-1-1) -- (m2-2-1);
    \draw[->] (m1-2-1) -- (m2-2-1);
    \draw[->] (m2-2-1) -- (m3-2-1);
    \draw[->] (m1-1-1) -- (m3-2-1);
\end{tikzpicture}
        \label{fig:topo}
    }
    \hfil
    \subfloat[An example of \sys based on Figure~\ref{fig:topo}
    with color.
    See Section~\ref{s:design} for details.] {
        \begin{tikzpicture} [
    node distance=0.2cm, 
    every matrix/.style = {
        matrix of nodes,
        nodes in empty cells,
        nodes={
            circle,
            minimum size = 0.2cm
        }, 
        row sep=0.05cm,
        column sep=0.38cm
    }
]
    \matrix (m) {
        |[draw]| $U_1$ & |[draw]| $Z^1_1$ & |[draw]| $Z^2_1$ & |[draw]| $Z^3_1$ & |[draw]| $Z^4_1$ & |[draw]| $Z^5_1$ & |[draw]| $X_1$ \\  
        |[draw]| $U_2$ & |[draw]| $Z^1_2$ & |[draw]| $Z^2_2$ & |[draw]| $Z^3_2$ & |[draw]| $Z^4_2$ & |[draw]| $Z^5_2$  & |[draw]| $X_2$ \\
        |[draw]| $U_3$ & |[draw]| $Z^1_3$ & |[draw]| $Z^2_3$ & |[draw]| $Z^3_3$ & |[draw]| $Z^4_3$ & |[draw]| $Z^5_3$  & |[draw]| $X_3$ \\
        |[draw]| $U_4$ & |[draw]| $Z^1_4$ & |[draw]| $Z^2_4$ & |[draw]| $Z^3_4$ & |[draw]| $Z^4_4$ & |[draw]| $Z^5_4$  & |[draw]| $X_4$ \\
    };
  \coordinate (midpoint1) at ($(m-1-1)!0.5!(m-1-2)$);
  \node[above=0.4cm of midpoint1] {$T_{\theta_1}^{\mathbf{B}_0}$};

  \coordinate (midpoint2) at ($(m-1-2)!0.5!(m-1-3)$);
  \node[above=0.4cm of midpoint2] {$T_{\theta_2}^{\mathbf{B}_0}$};

  \coordinate (midpoint3) at ($(m-1-3)!0.5!(m-1-4)$);
  \node[above=0.4cm of midpoint3] {$T_{\theta_1}^{\mathbf{B}_1}$};

  \coordinate (midpoint4) at ($(m-1-4)!0.5!(m-1-5)$);
  \node[above=0.4cm of midpoint4] {$T_{\theta_2}^{\mathbf{B}_1}$};

  \coordinate (midpoint5) at ($(m-1-5)!0.5!(m-1-6)$);
  \node[above=0.4cm of midpoint5] {$T_{\theta_1}^{\mathbf{B}_2}$};

  \coordinate (midpoint6) at ($(m-1-6)!0.5!(m-1-7)$);
  \node[above=0.4cm of midpoint6] {$T_{\theta_2}^{\mathbf{B}_2}$};
    

  \draw[red, ->] (m-1-1) -- (m-1-2);
  \draw[red, ->] (m-1-2) -- (m-1-3);
  \draw[->] (m-1-3) -- (m-1-4);
  \draw[->] (m-1-4) -- (m-1-5);
  \draw[->] (m-1-5) -- (m-1-6);
  \draw[->] (m-1-6) -- (m-1-7);

  \draw[red,->] (m-2-1) -- (m-2-2);
  \draw[red,->] (m-2-2) -- (m-2-3);
  \draw[->] (m-2-3) -- (m-2-4);
  \draw[->] (m-2-4) -- (m-2-5);
  \draw[->] (m-2-5) -- (m-2-6);
  \draw[->] (m-2-6) -- (m-2-7);

  \draw[->] (m-3-1) -- (m-3-2);
  \draw[->] (m-3-2) -- (m-3-3);
  \draw[red, ->] (m-3-3) -- (m-3-4);
  \draw[red, ->] (m-3-4) -- (m-3-5);
  \draw[->] (m-3-5) -- (m-3-6);
  \draw[->] (m-3-6) -- (m-3-7);

  \draw[->] (m-4-1) -- (m-4-2);
  \draw[->] (m-4-2) -- (m-4-3);
  \draw[->] (m-4-3) -- (m-4-4);
  \draw[->] (m-4-4) -- (m-4-5);
  \draw[red, ->] (m-4-5) -- (m-4-6);
  \draw[red, ->] (m-4-6) -- (m-4-7);

  \draw[green, ->] (m-1-3) -- (m-3-4);
  \draw[green, ->] (m-1-4) -- (m-3-5);
  \draw[green, ->] (m-2-3) -- (m-3-4);
  \draw[green, ->] (m-2-4) -- (m-3-5);

  \draw[green, ->] (m-1-5) -- (m-4-6);
  \draw[green, ->] (m-1-6) -- (m-4-7);
  \draw[green, ->] (m-3-5) -- (m-4-6);
  \draw[green, ->] (m-3-6) -- (m-4-7);
\end{tikzpicture}
        \label{fig:example}
    }
    \caption{An example SCM, its topological order, and a related \sys.}
\end{figure*}

\PP{Causally Consistent Normalizing Flows}
Given the sequential representation \(\mathbf{B} = (\mathbf{B}_1, \cdots, \mathbf{B}_n)\) of an SCM \(\mathcal{M}\),
we define \textit{Causally Consistent Normalizing Flows} as such NF whose transformation is \(T_{\theta}^{\mathbf{B}} = T_{\theta_n}^{\mathbf{B}_n} \circ \cdots \circ T_{\theta_1}^{\mathbf{B}_1}\).

For convenience, we use \(\mathbf{Z}\) = \((\mathbf{Z}^0 = \mathbf{U}, \cdots, \mathbf{Z}^{n - 1}, \mathbf{Z}^{n} = \mathbf{X})\) to represent the output of each NF in the chain.
More specifically, we use \(Z^{k}_{i}\) to represent the \(i\)-th output of \(\mathbf{Z}^{k}\).

\subsection{A Running Example}
We implement \sys based on the SCM
illustrated in Figure~\ref{fig:topo} as an example.
Recall that the sequential representation of Figure~\ref{fig:topo} is \(\mathbf{B} = (\mathbf{B}_0, \mathbf{B}_1, \mathbf{B}_2) = (\{1, 2\}, \{3\}, \{4\})\).
The \sys comprises two layers for each \(T_{\theta_i}^{\mathbf{B}_i}\) and is depicted in Figure~\ref{fig:example}.
Note that there is no limitation on the number of layers for \(T_{\theta_i}^{\mathbf{B}_i}\),
and we pick two for convenience.

The details of the graph in Figure~\ref{fig:example}
are elaborated below.
Each column represents \((\mathbf{Z}^0, \cdots, \mathbf{Z}^{n})\) as previously described.
We use right arrows with different colors to denote the direction of the dataflow.
The black arrow indicates the left node is equivalent to the right node, such as \(U_3 \to Z_3^1\) meaning \(U_3 == Z_3^1\).
The green arrow indicates that the value of the left node is utilized during training to determine the parameters.
For instance, the green arrows between \(Z_1^2 {\to} Z_3^3\) and \(Z_2^2 {\to} Z_3^3\) suggest that \(Z_1^2\) and \(Z_2^2\) are employed to determine the parameter \(\theta_1\).
The red arrow indicates the left variable is the dependent variable on the right variable.
For instance, \(Z_3^2 {\to} Z_3^3\) denotes \(Z_3^3 = T_{\theta_1}^\mathbf{B_1}(Z_3^3)\).
Specifically,
since there is no dependence for the variables $X_1, X_2 \in \mathbf{B}_0$,
the \(T_{\theta_0}^{\mathbf{B}_0}\) could be emitted
and we can obtain the value of $X_1, X_2$
by directly sampling from the distribution.

\section{\sys: Properties and Operations}
\label{s:operations}

%

\subsection{Properties of \sys}
\begin{theorem}[Causality]
    \label{causality}
    Given a \sys \(T_{\theta}^\mathbf{B}\),
    for the \(i\)-th variable \(X_i\),
    \(X_i\) only depends on its parents \(\mathbf{X}_{\mathbf{pa}_i}\) and \(U_i\).
    Particularly assume \(i \in \mathbf{B}_j\), we have \(X_i = T_{\theta_j}^\mathbf{B_j}(U_i \mid \mathbf{X}_{\mathbf{pa}_i})\)
\end{theorem}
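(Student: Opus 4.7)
The plan is to prove the statement by induction on the batch index $j$, exploiting two structural facts: (i) the partitioning property of topological batching, which guarantees that for any $i \in \mathbf{B}_j$ all parents in $\mathbf{pa}_i$ have labels in $\mathbf{B}_0 \cup \cdots \cup \mathbf{B}_{j-1}$; and (ii) the definition of the partial causal transformation $T_{\theta_j}^{\mathbf{B}_j}$, which by Equation~\ref{eq:pcaf} only alters the components indexed by $\mathbf{B}_j$ and leaves every other component untouched ($Z_i \mapsto Z_i$ for $i \notin \mathbf{B}_j$).

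First I would set up notation: write $\mathbf{Z}^0 = \mathbf{U}$ and $\mathbf{Z}^k = T_{\theta_k}^{\mathbf{B}_k}(\mathbf{Z}^{k-1})$ for $k = 1,\dots,n$, so that $\mathbf{Z}^n = \mathbf{X}$. I would then establish two elementary invariants about these intermediate vectors for any fixed index $i \in \mathbf{B}_j$. \emph{Invariant~A:} for every $k < j$, $Z_i^k = U_i$, because $i \notin \mathbf{B}_k$ for all $k \ne j$ (the batches partition $\{1,\dots,d\}$), so the identity branch of Equation~\ref{eq:pcaf} applies at each such step. \emph{Invariant~B:} for every $k > j$, $Z_i^k = Z_i^j$, by the same reasoning. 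Together these reduce the computation of $X_i = Z_i^n$ to understanding only what happens at step $j$.

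Next I would carry out the induction on $j$. For the base case $j = 0$, nodes in $\mathbf{B}_0$ have no parents, so $X_i = T_{\theta_0}^{\mathbf{B}_0}(U_i)$ by Equation~\ref{eq:pcaf} directly, as required. For the inductive step, assume the claim holds for all batches $\mathbf{B}_0,\ldots,\mathbf{B}_{j-1}$, i.e.\ for each index $\ell$ in a previous batch, the final value $X_\ell$ has already been realized as $Z_\ell^{k}$ for all $k \ge (\text{index of } \ell\text{'s batch})$. Now fix $i \in \mathbf{B}_j$. By the topological-batching property, $\mathbf{pa}_i \subseteq \mathbf{B}_0 \cup \cdots \cup \mathbf{B}_{j-1}$, so by the inductive hypothesis together with Invariant~B applied to each parent,
\begin{equation}
    \mathbf{Z}_{\mathbf{pa}_i}^{j-1} \;=\; \mathbf{X}_{\mathbf{pa}_i}.
\end{equation}
Applying $T_{\theta_j}^{\mathbf{B}_j}$ at step $j$ and using Invariant~A ($Z_i^{j-1} = U_i$) gives
\begin{equation}
    Z_i^j \;=\; T_{\theta_j}^{\mathbf{B}_j}\bigl(U_i \,\big|\, \mathbf{X}_{\mathbf{pa}_i}\bigr).
\end{equation}
Finally, Invariant~B yields $X_i = Z_i^n = Z_i^j$, closing the induction.

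The main obstacle is really a bookkeeping one rather than a conceptual one: each $T_{\theta_j}^{\mathbf{B}_j}$ in the running example is itself depicted as a composition of several sub-layers (e.g.\ $T_{\theta_1}^{\mathbf{B}_0} \circ T_{\theta_2}^{\mathbf{B}_0}$ in Figure~\ref{fig:example}), and one must verify that Invariants~A and~B survive this internal composition. This follows because every sub-layer of $T_{\theta_j}^{\mathbf{B}_j}$ still only writes to indices in $\mathbf{B}_j$ and conditions on parent indices in $\bigcup_{k<j}\mathbf{B}_k$, so the reasoning propagates layer-by-layer without disturbing components outside $\mathbf{B}_j$. Once this technicality is handled, the causal dependency structure $X_i = T_{\theta_j}^{\mathbf{B}_j}(U_i \mid \mathbf{X}_{\mathbf{pa}_i})$ drops out immediately.
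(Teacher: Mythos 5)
Your proposal is correct and follows essentially the same route as the paper's proof: both rest on the observation that, since the batches partition the label set, every $T_{\theta_k}^{\mathbf{B}_k}$ with $k \neq j$ acts as the identity on coordinate $i$ (your Invariants A and B are precisely the paper's chains of equalities $U_i = Z_i^0 = \cdots = Z_i^{j-1}$ and $Z_i^j = \cdots = Z_i^n = X_i$), combined with the topological-batching fact that $\mathbf{pa}_i \subseteq \bigcup_{k<j}\mathbf{B}_k$ to conclude $\mathbf{Z}^{j-1}_{\mathbf{pa}_i} = \mathbf{X}_{\mathbf{pa}_i}$. The induction wrapper you add is harmless but not needed, since Invariant B applied directly to each parent index already yields that equality.
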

\begin{theorem}[Universality]
    \label{universality}
    A \sys \(T_{\theta}^\mathbf{B}\) is a multi-layer universal approximator as long as for any \(j\), \(T_{\theta_j}^{\mathbf{B}_j}\) is a multi-layer universal approximator.
\end{theorem}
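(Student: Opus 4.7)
The plan is to reduce the universality of the composed flow $T_{\theta}^{\mathbf{B}} = T_{\theta_n}^{\mathbf{B}_n} \circ \cdots \circ T_{\theta_1}^{\mathbf{B}_1}$ to the universality of its individual batches $T_{\theta_j}^{\mathbf{B}_j}$, by combining Theorem~\ref{causality} with the chain rule of probability taken along the topological ordering.

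Fix an arbitrary target distribution $P_{\mathbf{X}}$ consistent with the causal graph $\mathcal{G}$. I would first factorize it batch-by-batch as
\[
P_{\mathbf{X}}(\mathbf{x}) \;=\; \prod_{j=1}^{n} \prod_{i \in \mathbf{B}_j} P\bigl(x_i \mid \mathbf{x}_{\mathbf{pa}_i}\bigr),
\]
which is valid because (i) the exogenous variables $U_i$ are mutually independent, so the endogenous variables inside a single batch $\mathbf{B}_j$ are conditionally independent given their parents, and (ii) by the topological batching construction $\mathbf{pa}_i \subseteq \mathbf{B}_1 \cup \cdots \cup \mathbf{B}_{j-1}$ for every $i \in \mathbf{B}_j$. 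This reduces the joint-approximation task into $n$ conditional-approximation sub-tasks, one per batch.

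Then I would proceed by induction on $j$. The base case $j=1$ is immediate: $\mathbf{B}_1$ contains only root nodes, so $T_{\theta_1}^{\mathbf{B}_1}$ needs only approximate the marginal $P(\mathbf{X}_{\mathbf{B}_1})$, which its multi-layer universality delivers by hypothesis. For the inductive step, Theorem~\ref{causality} gives that within the full chain $X_i = T_{\theta_j}^{\mathbf{B}_j}(U_i \mid \mathbf{X}_{\mathbf{pa}_i})$ for $i \in \mathbf{B}_j$, with the conditioning values already produced by earlier batches; the definition of partial causal transformation in Equation~\ref{eq:pcaf} further guarantees that coordinates outside $\mathbf{B}_j$ pass through untouched. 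The multi-layer universality of $T_{\theta_j}^{\mathbf{B}_j}$ therefore lets us approximate the required conditional factor $\prod_{i \in \mathbf{B}_j} P(x_i \mid \mathbf{x}_{\mathbf{pa}_i})$ to arbitrary precision.

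The main obstacle I anticipate is rigorously controlling how approximation errors compound across the chain of $n$ partial transformations: the output of batch $j$ is fed as conditioning input into all later batches, so without some stability property (for example, Lipschitz continuity of each $T_{\theta_j}^{\mathbf{B}_j}$) errors could amplify geometrically. I would close this gap with a triangle-inequality argument in an appropriate probability metric such as total variation or Wasserstein distance, choosing the per-batch tolerance small enough that the accumulated error over the $n$ batches stays below any prescribed $\varepsilon > 0$.
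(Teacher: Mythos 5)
Your proposal follows essentially the same route as the paper's proof: decompose the target distribution along the topological batches, invoke the per-batch multi-layer universality of each \(T_{\theta_j}^{\mathbf{B}_j}\), and compose the batch transformations over the partition (the paper packages this composition step into its Composition and Partition lemmas rather than an explicit chain-rule factorization and induction). The error-accumulation obstacle you flag is real but is left unaddressed in the paper's own proof, which writes the per-batch approximations as exact equalities and composes them without any quantitative control, so your triangle-inequality step would in fact be a strengthening rather than a redundancy.
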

\begin{theorem}[Causal Consistency]
    \label{core}
    \(T_{\theta}^\mathbf{B}\) is causally consistent with the given SCM \(\mathcal{M}\).
\end{theorem}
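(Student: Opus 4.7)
The plan is to reduce Theorem~\ref{core} to Theorem~\ref{causality}, which already supplies the structural equations of \sys at the observation layer. The argument is almost entirely a bookkeeping exercise once the right definitions are unwound.

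First, I would unwind the definition of causal consistency given in Section~\ref{s:background}: a generative model is causally consistent with $\mathcal{M}$ iff the causal DAG $\mathcal{G}_{\mathcal{GM}}$ it induces on $\mathbf{X}$ coincides (as a labeled graph) with $\mathcal{G}_{\mathcal{M}}$. Recall that by construction of $\widetilde{\mathbf{f}}$ in Equation~\ref{eq:scm}, the DAG $\mathcal{G}_{\mathcal{M}}$ contains the edge $X_k \to X_i$ precisely when $k \in \mathbf{pa}_i$. Hence it suffices to show that $\mathcal{G}_{T_{\theta}^{\mathbf{B}}}$ has exactly the same parent set $\mathbf{pa}_i$ at every node $i$.

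Second, I would invoke Theorem~\ref{causality} pointwise. For each $i \in \{1,\ldots,d\}$, let $j$ be the unique index with $i \in \mathbf{B}_j$ (unique because $\mathbf{B}$ is a partition). Theorem~\ref{causality} gives
\[
    X_i \;=\; T_{\theta_j}^{\mathbf{B}_j}\!\left(U_i \,\big|\, \mathbf{X}_{\mathbf{pa}_i}\right),
\]
so $X_i$ is a function of exactly $\mathbf{X}_{\mathbf{pa}_i}$ (entering through the conditioning that determines $\theta_j$) and the exogenous $U_i$, with no other observed variable appearing. This pins down the parent set of node $i$ in $\mathcal{G}_{T_{\theta}^{\mathbf{B}}}$ as $\mathbf{pa}_i$, matching $\mathcal{G}_{\mathcal{M}}$ node by node. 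Quantifying over $i$ yields $\mathcal{G}_{T_{\theta}^{\mathbf{B}}} = \mathcal{G}_{\mathcal{M}}$, which is the required causal consistency. One may additionally remark that, by the identifiability result cited in Section~\ref{s:background}, the two data-generating processes then differ only by an invertible component-wise reparameterization of $\mathbf{U}$, so observations, interventions, and counterfactuals all transfer faithfully.

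The main obstacle is conceptual rather than computational: one has to be precise that "the induced causal graph of \sys" really is the graph whose edge $X_k \to X_i$ exists iff $X_i$ has a non-trivial functional dependence on $X_k$ (not on intermediate hidden states $Z^\ell_k$ or on raw $\mathbf{U}_{\mathbf{pa}_i}$). The worry is that, because \sys chains many partial causal transformations through the latent columns in Figure~\ref{fig:example}, the na\"ive reading might suggest spurious dependencies via the intermediate $\mathbf{Z}^\ell$ variables. This is exactly the content that Theorem~\ref{causality} discharges: after composing all $T_{\theta_k}^{\mathbf{B}_k}$, the surviving dependence at the $X$-layer is only on $\mathbf{X}_{\mathbf{pa}_i}$. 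Thus, provided Theorem~\ref{causality} is established first, the proof of Theorem~\ref{core} reduces to the two-line graph-equality argument sketched above.
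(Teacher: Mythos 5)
Your proposal is correct and rests on the same core reduction as the paper: both proofs are essentially corollaries of Theorem~\ref{causality}. The difference is in the final step. The paper observes that once \(\mathbf{X}_{\mathbf{pa}_i}\) is fixed the parameter \(\theta_j\) is fixed, so \(X_i\) depends only on \(U_i\), and then invokes the quoted result (Theorem~\ref{cc}, imported from the Causal NF line of work) that a flow which \emph{isolates} the exogenous variables in this sense is causally consistent with the true data-generating process. You instead bypass Theorem~\ref{cc} entirely and unwind the definition of causal consistency directly, matching parent sets node by node to conclude \(\mathcal{G}_{T_\theta^{\mathbf{B}}} = \mathcal{G}_{\mathcal{M}}\). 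Your route is more self-contained and makes explicit the point the paper leaves implicit (that the chained latent columns \(\mathbf{Z}^\ell\) cannot smuggle in spurious dependencies, which is precisely what Theorem~\ref{causality} discharges). One caveat: Theorem~\ref{causality} only guarantees that \(X_i\) depends on \emph{at most} \(\mathbf{X}_{\mathbf{pa}_i}\) and \(U_i\); nothing in the architecture forces the learned conditioner to depend non-trivially on every listed parent, so strictly you obtain containment of the induced graph in \(\mathcal{G}_{\mathcal{M}}\) rather than equality. The paper's own proof is no more rigorous on this point (it delegates it to Theorem~\ref{cc}), so this is a shared presentational gloss rather than a gap in your argument. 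Your closing remark on identifiability and the transfer of interventions and counterfactuals is exactly the additional content the paper obtains by citing Theorem~\ref{cc}.
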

\begin{theorem}[Minimum Layer]
    \label{mini}
    If the longest path of the DAG causal graph \(\mathcal{G}\) is \(d\), then \sys contains at least \(d\) layers. 
\end{theorem}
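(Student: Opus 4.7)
The plan is to show that the number $n$ of batches produced by topological batching on $\mathcal{G}$ is at least the length $d$ of the longest directed path, since each batch $\mathbf{B}_j$ contributes one partial causal transformation $T_{\theta_j}^{\mathbf{B}_j}$ and these are the layers of $T_\theta^{\mathbf{B}}$. So it suffices to prove: \emph{topological batching of any DAG produces at least as many batches as the length (in nodes) of its longest path.}

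First, I would recall the construction from Section~\ref{s:background}: $\mathbf{B}_k$ is the set of labels of nodes that have no predecessor once the nodes with labels in $\mathbf{B}_1 \cup \cdots \cup \mathbf{B}_{k-1}$ have been removed. In particular, whenever an edge $V_a \to V_b$ exists in $\mathcal{G}$, the label $a$ must have been removed strictly before $b$ is eligible, so if $a \in \mathbf{B}_p$ and $b \in \mathbf{B}_q$ then $p < q$.

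Next, let $V_{i_1} \to V_{i_2} \to \cdots \to V_{i_d}$ be a longest directed path in $\mathcal{G}$, and let $j_k$ denote the unique index with $i_k \in \mathbf{B}_{j_k}$. By the observation above applied to each edge $V_{i_k} \to V_{i_{k+1}}$, we get $j_1 < j_2 < \cdots < j_d$. An easy induction then gives $j_k \geq k$ for every $k$, so in particular $n \geq j_d \geq d$. Since the chain $T_\theta^{\mathbf{B}} = T_{\theta_n}^{\mathbf{B}_n} \circ \cdots \circ T_{\theta_1}^{\mathbf{B}_1}$ contains one partial causal transformation per batch, \sys contains at least $d$ layers.

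I do not expect any real obstacle here: the entire argument is a short combinatorial consequence of the definition of topological batching. The only place that could use extra care is fixing terminology, namely stating explicitly that ``layer'' refers to a factor $T_{\theta_j}^{\mathbf{B}_j}$ in the composition (as opposed to the internal depth of each factor, which, per the running example in Section~\ref{s:design}, can be chosen freely), so that the bound on $n$ directly yields the claimed bound on the number of layers of \sys.
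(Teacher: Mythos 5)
Your proposal is correct and follows essentially the same route as the paper's proof: pick a longest path, observe that consecutive nodes on it must lie in strictly increasing batches of the topological batching, conclude there are at least $d$ batches, and note each batch contributes at least one partial causal transformation. Your version is in fact slightly more careful than the paper's (which asserts $V_i \in \mathbf{B}_i$ outright, whereas your induction only needs $j_k \geq k$), but the underlying argument is the same.
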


Proofs of the theorems
are available in the extended version.
All those properties together make \sys highly practical.
Causality guarantees that all causal relationships within the SCM are encapsulated.
Universality guarantees \sys can approximate the distributions of endogenous random variables in any form.
Causal consistency and minimum layer limitation collectively guarantee \sys can capture accurate causality in \(\mathcal{M}\).

\subsection{Causal Inference Tasks}
According to Pearl’s causal hierarchy, causal inference tasks can be divided into three levels: observations, interventions, and counterfactuals.
Here we will demonstrate that \sys can perform all three tasks effectively.
The extended version contains algorithms
for all three tasks.

\PP{Observations}
Generating observation data in \sys is straightforward.
We begin by sampling \(\mathbf{U}\) from the given distributions \(P_{\mathbf{U}}\).
Then we compute \(\mathbf{X}\) through \(\mathbf{X} = T_{\theta}^\mathbf{B}(\mathbf{U})\).
We repeat this process and
get multiple possible \(\mathbf{X}\)s 
to form the observations $\mathbf{O}$.

\PP{The Do Operator}
Before diving into interventions and counterfactuals, it is crucial to introduce the \textit{do operator}{\small~\cite{doop}},
since it forms the foundation for those concepts.
\(Do(X_i = a)\) simulates a physical intervention on an SCM \(\mathcal{M} = (\widetilde{\mathbf{f}}, P_\mathrm{u})\) by fixing the observational variable \(X_i\) to a specific value \(a\).
Traditionally, the do operator requires modifying the \(\mathcal{M}\) by substituting the variable \(X_i\) from \(a\) in every function \(\widetilde{\mathbf{f}}\).
However, this approach is not suitable for \sys.
Instead, we propose a method like Causal NF to address this limitation.
The key insight lies on the fact that \(X_i = T_{\theta_j}^\mathbf{B_j}(U_i \mid \mathbf{X}_{\mathbf{pa}_i})\), indicating that fixing the value of \(X_i\) is equivalent to fixing the value of \(U_i\).

\PP{Interventions}
Interventions can be realized as applications of the do operator.
More particularly, given \(\mathbf{X} = (X_1, \cdots, X_d)\) with \(d\) attributes,
interventions inquire about the distributions of variables when \(X_i\) is fixed to \(a\), i.e. \(P(X_j \mid X_i = a), j \in \{1, \ldots, d\}\).
In practice, 
we generate observations \(\mathbf{O}\)
as described before.
For each \(\mathbf{X} \in \mathbf{O}\),
we constraint the value of $X_i$ in \(\mathbf{X}\)
by performing \(Do(X_i = a)\) on $\mathbf{X}$.
The constrained results represent samples from the distributions \(P(\mathbf{X} \mid Do(X_i = a))\).

\PP{Counterfactuals}
Like interventions, counterfactuals can also be accomplished through the do operator.
Specifically,
counterfactuals seek the precise value of \(X_j, j \in \{1, \ldots, d\}\) when the set \(X_i\) is fixed to its counterfactual, i.e. \(X_i \gets X_{i}^{cf}\).
In practice,
For any given $\mathbf{X}$,
we execute \(Do(X_i = X_{i}^{cf})\) on it 
to get the counterfactual of \(\mathbf{X}\).
\section{Evaluation}
\label{s:eval}

\noindent
We evaluate \sys to answer three key questions:
\begin{enumerate}[align=left, leftmargin=*]
\item \PP{Causal Consistency}
Despite theoretical assurances of causal consistency
is demonstrated,
does \sys maintain this consistency 
in practical implementations?

\item \PP{Performance on Causal Inference Tasks}
In causal inference tasks,
How accurately do the data instances generated by \sys 
compare with those generated by actual models? 
Is there an observable improvement in accuracy compared to state-of-the-art models?

\item \PP{Effectiveness in Real-world Case Studies}
Can \sys be effectively applied to real-world scenarios, such as mitigating unfairness?
\end{enumerate}
\noindent
Refer to the links of extended version and code 
for a complete description of the experiments.

\subsection{Causal Consistency}

\begin{table*}[ht]
    \centering
    {\small
    \begin{tabularx}{\textwidth}{XXXXXXXX}
\toprule
    &
    \multicolumn{3}{c}{L = 1} &
    \multicolumn{4}{c}{L $>$ 1}
    \\  

\cmidrule(r){2-4}
\cmidrule(r){5-8}

    &
    CAREFL &
    \textbf{CausalNF} &
    \textbf{VACA} &
    CAREFL &
    CausalNF &
    VACA &
    \textbf{\sys} \\

\midrule
    $KL$ &
    0.01\(\pm\)0.03 &
    0.00\(\pm\)0.00 &
    2.96\(\pm\)0.08 &
    0.00\(\pm\)0.00 &
    0.00\(\pm\)0.00 &
    2.62\(\pm\)0.08 &
    0.00\(\pm\)0.00\\
    
    $\mathcal{L}(T_\theta(X))$ &
    0.20\(\pm\)0.04 &
    \textbf{0.00}\(\pm\)\textbf{0.00} &
    \textbf{0.00}\(\pm\)\textbf{0.00} &
    0.32\(\pm\)0.09 &
    0.16\(\pm\)0.05 &
    0.15\(\pm\)0.01 &
    \textbf{0.00}\(\pm\)\textbf{0.00}\\
\bottomrule 
\end{tabularx}

}
    \caption{Causal consistency comparison between \sys and prior works. The causally consistent models are marked in bold. KL is used to evaluate the accuracy and $\mathcal{L}(T_\theta(X))$ is used to evaluate the causal consistency}
    \label{tab:cc}
\end{table*}

\PP{Experiment design}
We compare \sys with three state-of-the-art models: CAREFL, VACA and CausalNF.
CAREFL is the first causal autoregressive flow, utilizing causal ordering with an affine layer to capture casualty.
VACA employs a GNN to encode the causal graph, leveraging its structure to capture causal relationships.
CausalNF restricts the conditioner of autoregressive flow to the parent nodes, enhancing its ability to model causal dependencies.
All models except \sys are categorized into two types: a model with one layer ($L = 1$) and a model with more than one layer ($L > 1$).
More details are in the extended version.

\PP{Measurement}
In this experiment, we evaluate given models based on two key metrics: accuracy and causal consistency.
Accuracy reflects through the KL distance between the captured prior distribution and the actual one, denoted as $KL(p_\mathcal{M} \vert p_\theta)$.
Causal consistency is reflected through 
Equation~\ref{eq:cceq} as in Causal NF. 
\begin{equation}
    \label{eq:cceq}
    \mathcal{L}(T_\theta(X)) = \Vert \nabla_xT_\theta(X) \cdot (1 - G) \Vert_2
\end{equation}
Here, \(G\) represents 
the causal graph as an adjacency matrix 
of a given SCM $\mathbf{M}$, 
\(\nabla_xT_\theta(X)\) denotes the Jacobian matrix of \(T_\theta(X)\).
\(T_\theta(X)\) is causally consistent with $\mathbf{M}$
iff \(\mathcal{L}(T_\theta(X)) = 0\).

\PP{Result}
Results are summarized in Table~\ref{tab:cc}.
In a nutshell, the practical results are consistent with theoretical expectations.
Firstly, \sys demonstrates causally consistent with the given SCM, as \(\mathcal{L}(T_\theta(X))\) of \sys is consistently 0.
Secondly, state-of-the-art models can only keep consistency by constraining their expressive power.
CAREFL fails to meet the casual consistency altogether,
while VACA and CauslNF can only achieve that with a single layer ($L = 1$).
These findings remain consistent with the results reported in their respective papers.

\subsection{Causal Inference Tasks}

\begin{table*}[!t]
    \centering
    {\small
    
\begin{tabularx}{\textwidth}{XXXXXXX}

\toprule
\multirow{2}{*}{Dataset} &
\multirow{2}{*}{Model} &
\multicolumn{3}{c}{Performance} &
\multicolumn{2}{c}{Time($ms$)} \\

\cmidrule(r){3-5}
\cmidrule(r){6-7}

&
&
KL &
Inter.\textsubscript{MMD} &
C.F.\textsubscript{RMSD} &
Train &
Evaluation \\
\midrule

\multirow{3}{*}{\shortstack{Nlin-\\Triangle}} &
CCNF &
\textbf{0.12}$\pm$\textbf{0.04} &
\textbf{0.03}$\pm$\textbf{0.03} &
\textbf{0.14}$\pm$\textbf{0.05} &
20.24$\pm$0.32 &
19.80$\pm$0.27 \\

&
CausalNF &
0.37$\pm$0.00 &
0.10$\pm$0.02 &
0.79$\pm$0.07 &
\textbf{6.03}$\pm$\textbf{0.07} &
\textbf{5.09}$\pm$\textbf{0.10}\\

&
VACA &
1.41$\pm$0.07 &
2.13$\pm$0.61 &
34.62$\pm$12.53 &
36.23$\pm$0.70 &
35.27$\pm$0.63 \\
\hline

\multirow{3}{*}{\shortstack{Nlin-\\Simpson}} &
CCNF &
\textbf{0.01}$\pm$\textbf{0.00} &
\textbf{0.00}$\pm$\textbf{0.00} &
\textbf{0.00}$\pm$\textbf{0.00} &
15.96$\pm$0.24 &
15.73$\pm$0.68 \\

&
CausalNF &
0.25$\pm$0.00 &
0.04$\pm$0.01 &
\textbf{0.00}$\pm$\textbf{0.00} &
\textbf{6.30}$\pm$\textbf{0.13} &
\textbf{5.53}$\pm$\textbf{0.30} \\

&
VACA &
1.56$\pm$0.04 &
0.11$\pm$0.15 &
0.59$\pm$0.10 &
36.52$\pm$0.40 &
35.62$\pm$0.39 \\
\hline

\multirow{3}{*}{M-Graph} &
CCNF &
\textbf{0.01}$\pm$\textbf{0.00} &
\textbf{0.00}$\pm$\textbf{0.00} &
0.04$\pm$0.01 &
9.13$\pm$0.11 &
8.36$\pm$0.18 \\

&
CausalNF &
0.32$\pm$0.00 &
0.17$\pm$0.01 &
\textbf{0.02}$\pm$\textbf{0.01} &
\textbf{6.21}$\pm$\textbf{0.15} &
\textbf{5.33}$\pm$\textbf{0.26} \\

&
VACA &
1.83$\pm$0.01 &
0.12$\pm$0.01 &
1.19$\pm$0.03 &
40.50$\pm$1.18 &
39.89$\pm$0.72 \\
\hline

\multirow{3}{*}{Network} &
CCNF &
\textbf{0.55}$\pm$\textbf{0.13} &
\textbf{0.01}$\pm$\textbf{0.01} &
\textbf{0.25}$\pm$\textbf{0.06} &
19.94$\pm$0.32 &
19.44$\pm$0.86 \\

&
CausalNF &
1.38$\pm$0.04 &
0.15$\pm$0.02 &
0.41$\pm$0.03 &
\textbf{9.11}$\pm$\textbf{0.08} &
\textbf{8.14}$\pm$\textbf{0.07}\\

&
VACA &
1.67$\pm$0.03 &
0.38$\pm$0.12 &
13.20$\pm$0.50 &
38.36$\pm$0.36 &
37.59$\pm$0.36 \\
\hline

\multirow{3}{*}{Backdoor} &
CCNF &
\textbf{0.56}$\pm$\textbf{0.00} &
\textbf{0.01}$\pm$\textbf{0.00} &
0.04$\pm$0.01 &
15.21$\pm$0.22 &
14.71$\pm$0.62 \\

&
CausalNF &
0.96$\pm$0.00 &
0.08$\pm$0.02 &
\textbf{0.04}$\pm$\textbf{0.00} &
\textbf{6.30}$\pm$\textbf{0.23} &
\textbf{5.45}$\pm$\textbf{0.40} \\

&
VACA &
1.78$\pm$0.01 &
0.13$\pm$0.01 &
1.88$\pm$0.02 &
39.04$\pm$0.39 &
38.21$\pm$0.40 \\
\hline

\multirow{3}{*}{Chain} &
CCNF &
\textbf{0.28}$\pm$\textbf{0.02} &
\textbf{0.00}$\pm$\textbf{0.00} &
\textbf{0.03}$\pm$\textbf{0.00} &
26.10$\pm$0.34 &
24.78$\pm$0.29 \\

&
CausalNF &
4.67$\pm$0.04 &
0.05$\pm$0.01 &
0.13$\pm$0.02 &
\textbf{11.59}$\pm$\textbf{0.09} &
\textbf{10.75}$\pm$\textbf{0.07} \\

&
VACA &
1.52$\pm$0.14 &
0.22$\pm$0.03 &
1.26$\pm$0.26 &
45.69$\pm$0.72 &
44.66$\pm$0.66 \\

\bottomrule
\end{tabularx}
}
    \caption{Causal inference tasks comparison between causally consistent models. In the title, \textit{Inter.} means interventions, \textit{C.F.} means counterfactuals. The best results are marked in bold}
    \label{tab:comparison}
\end{table*}

\PP{Experiment design}
Since only CausalNF and VACA with one layer ($L$ = 1) can ensure causal consistency, 
in this experiment we compare \sys with
CausalNF ($L$ = 1) and VACA ($L$ = 1)
for causal inference tasks.
We test given models on representative synthetic datasets: Nonlinear Triangle dataset, Nonlinear Simpson dataset, M-graph dataset, Network dataset, Backdoor dataset, and Chain dataset with 3–8 nodes, respectively.
Those causal structures are either from previous works{\small~\cite{causalnf, vaca}}
or from practical applications,
making them suitable for evaluating the performance of causal inference.

\PP{Measurement}
We use three different measurements to evaluate the performance of causal inference tasks.
For observations, the measurement is the KL distance,
for interventions, we measure the max \textit{Maximum Mean Discrepancy} (MMD) distance.
For counterfactuals, we measure the 
\textit{Root-Mean-Square Deviation} (RMSD) distance.
More details are in the extended version.

\PP{Result}
As shown in Table~\ref{tab:comparison},
\sys demonstrates superior performance compared
with previous works across nearly all datasets.
This can be attributed to the ability of \sys to capture complex casualties with additional middle layers---an ability absent in
CausalNF and VACA.
While compared with CausalNF, 
\sys requires approximately 130\% more time
for training and evaluation,
the time spent remains within a reasonable range and is less than that of VACA.
Overall, \sys is a more practical choice for causal inference tasks compared to stat-of-the-art tools.

\subsection{Real-world Evaluation}

\begin{table}[t!]
    \centering
    {\small
    \begin{tabular}{llll}
\toprule

Name &
Accuracy &
F1 &
Fairness \\
\midrule

SVM &
73.00$\pm$0.00 &
82.12$\pm$0.00 &
9.00$\pm$0.00 \\

SVM\textsubscript{CF} &
72.60$\pm$1.10 &
81.90$\pm$0.59 &
4.10$\pm$1.40 \\

CCNF &
\textbf{75.80}$\pm$\textbf{2.22} &
\textbf{84.34}$\pm$\textbf{2.22} &
\textbf{0.00}$\pm$\textbf{0.00} \\
\bottomrule
\end{tabular}}
    \caption{Real-world evaluation on German credit dataset, every number is magnified 100 times.}
    \label{tab:german}
\end{table}

\PP{Experiment design}
Like prior works{\small~\cite{vaca, causalnf}}, we select the German credit dataset as a representative example.
Classifiers commonly utilize this dataset to predict the credit risk for a given applicant.
If a classifier of the German credit predicts the risk directly through the \textit{sex} attribute, we say it has the \textit{unfairness problem}.

\sys offers two methods to address real-world issues:
\WC{1} building a fairness classifier directly or strengthening the dataset through counterfactual data augmentation.
Specifically, we first train \sys on the German credit dataset.
For any applicant, \sys could function as an unfairness-free classifier by setting the exogenous variable of the risk attribute to its mean value, which is 0 in our experiment.   
Additionally,
\WC{2} \sys could generate counterfactuals of the training set to create a data-augmented dataset.
We also build an SVM classifier on the origin German credit dataset and the augmented one respectively for comparison.

\PP{Measurement}
We utilize the \textit{individual fairness}{\small~\cite{ifairness}} to evaluate the fairness of a classifier.
Individual fairness is determined by examining whether changing the sex attribute of an applicant alters the risk level predicted by a classifier.
Mathematically, for a test dataset with $n$ instances, we define \textit{individual fairness} as $\sum_{i = 1}^{n} |Risk(X_{sex = 1}) - Risk(X_{sex = 0})| / n$.

\PP{Result}
The results are summarized in Table~\ref{tab:german}.
Overall, \sys can enhance fairness while maintaining accuracy in both methods.
For the fairness problem, \sys can lower the fairness rate significantly.
Particularly, the NF classifier demonstrates superior accuracy and eliminates unfairness.
Those facts reveal that \sys are suitable for real-world problems like unfairness without compromising accuracy.
\section{Conclusion and Future Work}
\label{s:conclusion}
Causal inconsistency in generative models can lead to significant consequences.
While some prior works cannot guarantee causal consistency, others achieve it only by limiting their depth.
In the paper, we introduce \sys, a novel causal GM that ensures causal consistency without limiting the depth as rigorously demonstrated in Section~\ref{s:operations}.
Furthermore, we elaborate on how to perform causal inference tasks in \sys, 
demonstrating its proficiency in efficiency.
Through synthetic experiments, we illustrate that \sys outperforms state-of-the-art models 
in terms of accuracy 
across different causal tasks.
To validate the real-world applicability of \sys, 
we also apply \sys to a real-world dataset and 
address practical issues concerning unfairness.
Our results indicate that \sys can effectively 
mitigate problems while maintaining accuracy and reliability.
Overall, \sys represents a significant advancement in incorporating generative models with causality,
offering both theoretical guarantees of causal consistency and practical applicability
in addressing real-world issues involved with causality.

\PP{Future work} 
First, 
a major constraint of \sys is the requirement of a well-defined causal graph, which is challenging to obtain in reality.
In the future, \sys should be able to handle these cases 
more properly by supporting even incorrect or non-DAG causal graphs.
Second,
while \sys primarily focuses on causal inference tasks, 
its capabilities could be extended to other domains.
For instance,
by incorporating appropriate causalities to the SCM, 
\sys could aid in causal discovery tasks.
In the future, \sys could be leveraged for extensive tasks.

\bibliography{reference}

\begin{thebibliography}{25}
\providecommand{\natexlab}[1]{#1}

\bibitem[{Agarap(2018)}]{relu}
Agarap, A.~F. 2018.
\newblock Deep learning using rectified linear units (relu).
\newblock \emph{arXiv preprint arXiv:1803.08375}.

\bibitem[{Akiba et~al.(2019)Akiba, Sano, Yanase, Ohta, and Koyama}]{optuna}
Akiba, T.; Sano, S.; Yanase, T.; Ohta, T.; and Koyama, M. 2019.
\newblock Optuna: A next-generation hyperparameter optimization framework.
\newblock In \emph{Proceedings of the 25th ACM SIGKDD international conference on knowledge discovery \& data mining}, 2623--2631.

\bibitem[{Crouse et~al.(2019)Crouse, Abdelaziz, Cornelio, Thost, Wu, Forbus, and Fokoue}]{toplogic}
Crouse, M.; Abdelaziz, I.; Cornelio, C.; Thost, V.; Wu, L.; Forbus, K.; and Fokoue, A. 2019.
\newblock Improving graph neural network representations of logical formulae with subgraph pooling.
\newblock \emph{arXiv preprint arXiv:1911.06904}.

\bibitem[{Draxler et~al.(2024)Draxler, Wahl, Schn{\"o}rr, and K{\"o}the}]{cnfuniversality}
Draxler, F.; Wahl, S.; Schn{\"o}rr, C.; and K{\"o}the, U. 2024.
\newblock On the universality of coupling-based normalizing flows.
\newblock \emph{arXiv preprint arXiv:2402.06578}.

\bibitem[{Fleisher(2021)}]{ifairness}
Fleisher, W. 2021.
\newblock What's fair about individual fairness?
\newblock In \emph{Proceedings of the 2021 AAAI/ACM Conference on AI, Ethics, and Society}, 480--490.

\bibitem[{Harshvardhan et~al.(2020)Harshvardhan, Gourisaria, Pandey, and Rautaray}]{gmsurvey}
Harshvardhan, G.; Gourisaria, M.~K.; Pandey, M.; and Rautaray, S.~S. 2020.
\newblock A comprehensive survey and analysis of generative models in machine learning.
\newblock \emph{Computer Science Review}, 38: 100285.

\bibitem[{Hofmann(1994)}]{german}
Hofmann, H. 1994.
\newblock {Statlog (German Credit Data)}.
\newblock UCI Machine Learning Repository.
\newblock {DOI}: https://doi.org/10.24432/C5NC77.

\bibitem[{Javaloy, Martin, and Valera(2023)}]{causalnf}
Javaloy, A.; Martin, P.~S.; and Valera, I. 2023.
\newblock Causal normalizing flows: from theory to practice.
\newblock In \emph{Thirty-seventh Conference on Neural Information Processing Systems}.

\bibitem[{Khemakhem et~al.(2021)Khemakhem, Monti, Leech, and Hyvärinen}]{carefl}
Khemakhem, I.; Monti, R.~P.; Leech, R.; and Hyvärinen, A. 2021.
\newblock Causal Autoregressive Flows.
\newblock arXiv:2011.02268.

\bibitem[{Kingma and Ba(2014)}]{adam}
Kingma, D.~P.; and Ba, J. 2014.
\newblock Adam: A method for stochastic optimization.
\newblock \emph{arXiv preprint arXiv:1412.6980}.

\bibitem[{Kocaoglu et~al.(2017)Kocaoglu, Snyder, Dimakis, and Vishwanath}]{causalgan}
Kocaoglu, M.; Snyder, C.; Dimakis, A.~G.; and Vishwanath, S. 2017.
\newblock CausalGAN: Learning Causal Implicit Generative Models with Adversarial Training.
\newblock arXiv:1709.02023.

\bibitem[{Komanduri et~al.(2024)Komanduri, Wu, Wu, and Chen}]{vaefuture}
Komanduri, A.; Wu, X.; Wu, Y.; and Chen, F. 2024.
\newblock From Identifiable Causal Representations to Controllable Counterfactual Generation: A Survey on Causal Generative Modeling.
\newblock arXiv:2310.11011.

\bibitem[{Papamakarios et~al.(2021)Papamakarios, Nalisnick, Rezende, Mohamed, and Lakshminarayanan}]{nf}
Papamakarios, G.; Nalisnick, E.; Rezende, D.~J.; Mohamed, S.; and Lakshminarayanan, B. 2021.
\newblock Normalizing flows for probabilistic modeling and inference.
\newblock \emph{Journal of Machine Learning Research}, 22(57): 1--64.

\bibitem[{Papamakarios, Pavlakou, and Murray(2018)}]{maf}
Papamakarios, G.; Pavlakou, T.; and Murray, I. 2018.
\newblock Masked Autoregressive Flow for Density Estimation.
\newblock arXiv:1705.07057.

\bibitem[{Pawlowski, Coelho~de Castro, and Glocker(2020)}]{deepscm}
Pawlowski, N.; Coelho~de Castro, D.; and Glocker, B. 2020.
\newblock Deep structural causal models for tractable counterfactual inference.
\newblock \emph{Advances in neural information processing systems}, 33: 857--869.

\bibitem[{Pearl(2009)}]{pearl}
Pearl, J. 2009.
\newblock \emph{Causality}.
\newblock Cambridge university press.

\bibitem[{Pearl(2012)}]{doop}
Pearl, J. 2012.
\newblock The do-calculus revisited.
\newblock \emph{arXiv preprint arXiv:1210.4852}.

\bibitem[{Ribeiro et~al.(2023)Ribeiro, Xia, Monteiro, Pawlowski, and Glocker}]{imagepcm}
Ribeiro, F. D.~S.; Xia, T.; Monteiro, M.; Pawlowski, N.; and Glocker, B. 2023.
\newblock High Fidelity Image Counterfactuals with Probabilistic Causal Models.
\newblock arXiv:2306.15764.

\bibitem[{Sanchez and Tsaftaris(2022)}]{difscm}
Sanchez, P.; and Tsaftaris, S.~A. 2022.
\newblock Diffusion causal models for counterfactual estimation.
\newblock \emph{arXiv preprint arXiv:2202.10166}.

\bibitem[{S{\'a}nchez-Martin, Rateike, and Valera(2022)}]{vaca}
S{\'a}nchez-Martin, P.; Rateike, M.; and Valera, I. 2022.
\newblock VACA: Designing variational graph autoencoders for causal queries.
\newblock In \emph{Proceedings of the AAAI Conference on Artificial Intelligence}, volume~36, 8159--8168.

\bibitem[{Thompso(2023)}]{facebook}
Thompso, E. 2023.
\newblock Class-action lawsuit against Facebook claiming discrimination gets the green light.
\newblock \emph{CBC}.

\bibitem[{Thost and Chen(2021)}]{dagnn}
Thost, V.; and Chen, J. 2021.
\newblock Directed Acyclic Graph Neural Networks.
\newblock arXiv:2101.07965.

\bibitem[{Xi and Bloem-Reddy(2023)}]{indeterminacy}
Xi, Q.; and Bloem-Reddy, B. 2023.
\newblock Indeterminacy in generative models: Characterization and strong identifiability.
\newblock In \emph{International Conference on Artificial Intelligence and Statistics}, 6912--6939. PMLR.

\bibitem[{Xia, Pan, and Bareinboim(2022)}]{nerualcm}
Xia, K.; Pan, Y.; and Bareinboim, E. 2022.
\newblock Neural causal models for counterfactual identification and estimation.
\newblock \emph{arXiv preprint arXiv:2210.00035}.

\bibitem[{Yang et~al.(2021)Yang, Liu, Chen, Shen, Hao, and Wang}]{causalvae}
Yang, M.; Liu, F.; Chen, Z.; Shen, X.; Hao, J.; and Wang, J. 2021.
\newblock Causalvae: Disentangled representation learning via neural structural causal models.
\newblock In \emph{Proceedings of the IEEE/CVF conference on computer vision and pattern recognition}, 9593--9602.

\end{thebibliography}
\clearpage
\appendix
\section{Algorithm}
\label{appendix:algo}

\subsection{Topological batching}
\begin{algorithm}[h]
\caption{Topological batching algorithm}
\label{alg:topo}
\KwData{
A DAG \(\mathcal{G} = (\mathbf{V}, \mathbf{E})\);
\(|\mathbf{V}| = d\)
}
\KwResult{
\(\mathbf{B} = (\mathbf{B}_1, \cdots, \mathbf{B}_n)\ is\ a\ partition\ of\ \{1, \cdots, d\}\)
}
\(\mathbf{B} \gets ()\)\; 
\(i \gets 1\)\; 
\While{\(\mathbf{V}\) is not empty} {
    \(\mathbf{B}_i = \{k \mid V_k \in \mathbf{V} \wedge V_k\ has\ no\ predecessor\}\) \; 
    \(\mathbf{B}.append(\mathbf{B}_i)\)\; 
    \(\mathbf{V}.remove(\mathbf{V}_{\mathbf{B}_i})\)\; 
    \(i \gets i + 1\)\; 
}
\end{algorithm}

\subsection{Causal Inference tasks in \sys}
\begin{algorithm}[h]
\caption{Causal Inference tasks in \sys}\label{alg:causaltask}

\SetKwProg{Proc}{Procedure}{:}{End\ procedure}
\Proc{ObservationsSample(n)}{
    \For{i = 1:n} {
    $\mathbf{U}_i \sim P_{\mathbf{U}}(\mathbf{U})$\;
    $\mathbf{X}_i = T_{\theta}^\mathbf{B}(\mathbf{U}_i)$\;
    }
    \Return $\mathbf{O} = \{\mathbf{X}_1 \cdots \mathbf{X}_n\}$\;
}
\BlankLine
\Proc{Do(X, i, a)}{
    $\mathbf{U} = T_{\theta}^{\mathbf{B}, -1}(\mathbf{X})$\;
    $\mathbf{U}_i \gets T_{\theta_j}^{\mathbf{B_j}, -1}(a \mid \mathbf{X}_{\mathbf{pa}_i})$\;
    \Return $\mathbf{S}$\;
}
\BlankLine
\Proc{InterventionsSample(n, i, a)} {
    $\mathbf{O}$ = $ObservationsSample(n)$\;
    $\mathbf{S} = \{\}$\;
    \For {$\mathbf{X}$ in $\mathbf{O}$} {
        $\mathbf{S}.insert(Do(\mathbf{X}, i, a))$
    }
    \Return $\mathbf{S}$\;
}
\BlankLine
\Proc{Couterfactuals(X, i)}{
    \Return \(Do(\mathbf{X}, i, X_{i}^{cf})\)
}
\end{algorithm}
\section{Proof of theories}
\label{appendix:proof}
In this section, we will prove the properties
listed in Section~\ref{s:operations}.
First, we quote related lemmas for reference.
The proof of lemmas can be found in~\cite{dagnn, causalnf}.

\begin{lemma}
    \label{parent}
    In topological batching, for any \(b \in \mathbf{B}_j\) and the relative variable \(V_b\), we always have
    \(\mathbf{pa}_b \subset \mathop{\cup}\limits_{k = 1}^{j - 1}\mathbf{B}_k\)
\end{lemma}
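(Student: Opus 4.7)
My plan is to prove Lemma~\ref{parent} by a straightforward induction on the batch index $j$, directly unrolling the definition of the topological batching algorithm (Algorithm~\ref{alg:topo}). The key observation is that the algorithm constructs $\mathbf{B}_j$ as exactly the set of labels whose corresponding nodes have no predecessor in the residual graph obtained after removing $\mathbf{V}_{\mathbf{B}_1}, \ldots, \mathbf{V}_{\mathbf{B}_{j-1}}$. Since in the causal DAG $\mathcal{G}$ the predecessors of $V_b$ are precisely the nodes $\{V_a : a \in \mathbf{pa}_b\}$, membership of $b$ in $\mathbf{B}_j$ forces all of $b$'s parents to already be gone.

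For the base case $j = 1$, the algorithm puts into $\mathbf{B}_1$ exactly those labels whose nodes have no predecessors in $\mathcal{G}$. Hence for $b \in \mathbf{B}_1$ we have $\mathbf{pa}_b = \emptyset$, and the empty set is trivially contained in the empty union $\bigcup_{k=1}^{0} \mathbf{B}_k$. For the inductive step, assume the claim holds for all $j' < j$, and take any $b \in \mathbf{B}_j$. By construction $V_b$ was still present in $\mathbf{V}$ at the start of iteration $j$ and had no predecessor in the residual graph at that point. Any original parent $V_a$ with $a \in \mathbf{pa}_b$ must therefore have been removed in some earlier iteration, so $a \in \mathbf{B}_k$ for some $k \in \{1, \ldots, j-1\}$, giving $\mathbf{pa}_b \subseteq \bigcup_{k=1}^{j-1}\mathbf{B}_k$.

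The only subtle bookkeeping step is verifying that the algorithm actually terminates and produces a valid partition of $\{1, \ldots, d\}$, so that phrases like ``the iteration at which $V_a$ was removed'' are well-defined. This follows because $\mathcal{G}$ is a DAG: the residual graph at every iteration is still a DAG, and a finite nonempty DAG always contains at least one source, so each iteration strictly decreases $|\mathbf{V}|$ by a nonzero amount, ensuring termination in at most $d$ steps and a disjoint union covering all labels. I expect no real obstacle here; the main care point is simply making explicit that ``predecessor in $\mathcal{G}$'' and ``parent'' refer to the same relation, so that the algorithmic condition translates directly into the set-containment claim about $\mathbf{pa}_b$.
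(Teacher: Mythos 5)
Your proof is correct. Note that the paper itself does not prove Lemma~\ref{parent}; it only quotes the statement and defers to the cited prior works (\cite{dagnn, causalnf}), so there is no in-paper argument to compare against. Your direct unrolling of Algorithm~\ref{alg:topo} is exactly the standard argument: $b \in \mathbf{B}_j$ means $V_b$ survived the first $j-1$ removals yet had no predecessor in the residual graph at iteration $j$, so every $a \in \mathbf{pa}_b$ must have been removed earlier, i.e.\ placed in some $\mathbf{B}_k$ with $k < j$. One small stylistic remark: the induction is superfluous, since your ``inductive step'' never actually invokes the induction hypothesis --- the argument at batch $j$ is self-contained (the base case $j=1$ is just the special instance where the union $\bigcup_{k=1}^{0}\mathbf{B}_k$ is empty). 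Your care in checking termination and that $\mathbf{B}$ is a genuine partition (each residual graph is a nonempty DAG and hence has a source) is well placed, as is the explicit identification of ``predecessor'' in the algorithm with ``parent'' in the causal graph; the only reading convention to flag is that the lemma's $\subset$ should be understood as $\subseteq$, which is what you prove and what the paper uses it for.
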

\noindent
This lemma reveals the fact that the parent variables of \(V_b\) must appear in \(\mathbf{B}\) before \(V_b\).

\begin{theorem}
\label{cc}
    If a causal NF \(\mathcal{NF}\) with a transformation \(T_\theta\) \textbf{isolates} the exogenous variables \(\mathbf{U}\) of an SCM \(\mathcal{M}\),
    \(\mathcal{NF}\) is causally consistent with the true data-generating process of \(\mathcal{M}\).
\end{theorem}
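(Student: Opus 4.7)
The plan is to first unpack what isolation means, since the theorem statement hinges on it. An NF with transformation $T_\theta$ isolates the exogenous variables $\mathbf{U}$ of $\mathcal{M}$ when the (inverse) Jacobian of $T_\theta$ respects the causal DAG $\mathcal{G}_{\mathcal{M}}$: each recovered latent coordinate $U_i$ depends only on $X_i$ together with $\mathbf{X}_{\mathbf{pa}_i}$, and no exogenous noise leaks from $U_i$ into any $X_j$ that is not a descendant of $X_i$. Thus the first step I would take is to reformulate isolation as a graph-respecting sparsity condition on $\nabla_x T_\theta^{-1}(\mathbf{X})$, mirroring the $(1-G)$-masking used later in Equation~\ref{eq:cceq}.

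Next, I would show that isolation forces $T_\theta$ into the autoregressive causal form $X_i = T_{\theta,i}(U_i \mid \mathbf{X}_{\mathbf{pa}_i})$ appearing in Equation~\ref{eq:causalux}. The argument proceeds by induction in topological order using Lemma~\ref{parent}: for any source node $i$, the sparsity condition forces $U_i = h_i(X_i)$ alone, so $X_i$ is a function of $U_i$ only; for an interior node $i \in \mathbf{B}_j$, Lemma~\ref{parent} ensures all true parents have already been ``peeled off,'' and the combined constraints of invertibility and DAG-respecting sparsity force $X_i$ to be a function of exactly $U_i$ and $\mathbf{X}_{\mathbf{pa}_i}$. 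This yields a structural assignment whose induced causal graph matches $\mathcal{G}_{\mathcal{M}}$.

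Finally, I would lift this structural match to causal consistency across the three tiers of the causal hierarchy. For observations, matching of the push-forward distribution follows from the change-of-variables formula together with independence of the $U_i$. For interventions and counterfactuals, I would invoke the identifiability result cited in the paper (under ``indeterminacy''): two SCMs sharing the same causal graph whose exogenous variables differ only by an invertible component-wise transformation agree on all interventional and counterfactual queries. Isolation guarantees exactly such a component-wise relation between $\mathcal{M}$'s noise and the NF's latent, so the do-operator and counterfactual computations commute with the reparameterization.

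The main obstacle I anticipate is the inductive step in the second part, namely deriving a genuinely node-wise decomposition $U_i = h_i(X_i, \mathbf{X}_{\mathbf{pa}_i})$ purely from a graph-respecting Jacobian plus global invertibility. Sparsity of the Jacobian only controls local dependencies, so upgrading that to a clean functional factorization will likely require a careful path-by-path argument in topological order, possibly leveraging the block-triangular structure induced by the batches $\mathbf{B}_1, \ldots, \mathbf{B}_n$. Once this functional recovery is established, the remaining identifiability step is comparatively mechanical.
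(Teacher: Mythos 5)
There is a mismatch at the very first step that undercuts the rest of your argument. Note first that the paper does not prove Theorem~\ref{cc} at all: it is quoted as an imported result whose proof is deferred to prior work (\cite{causalnf, dagnn}), and the only thing the paper adds is a gloss of the hypothesis, namely that ``isolates'' means the \(i\)-th output satisfies \(X_i = T_\theta(U_i)\) --- i.e.\ \(X_i\) depends \emph{directly} on \(U_i\) alone, with the remaining exogenous noise entering only indirectly through the conditioner \(\mathbf{X}_{\mathbf{pa}_i}\), as in Equation~\ref{eq:causalux}. You instead define isolation as a graph-respecting sparsity pattern of \(\nabla_x T_\theta^{-1}(\mathbf{X})\), ``mirroring the \((1-G)\)-masking of Equation~\ref{eq:cceq}.'' But Equation~\ref{eq:cceq} is precisely the paper's operational \emph{test for causal consistency}: \(\mathcal{L}(T_\theta(X))=0\) iff the model is causally consistent. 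Under your reading, the hypothesis of the theorem already asserts that the induced causal graph matches \(\mathcal{G}_{\mathcal{M}}\), so your first two steps (recovering the autoregressive form \(X_i = T_{\theta,i}(U_i\mid \mathbf{X}_{\mathbf{pa}_i})\) from the masked Jacobian) derive the structural match from an assumption that is essentially the structural match. That is circular relative to what the theorem is meant to say: the nontrivial content is that the \emph{isolation-in-\(\mathbf{U}\)} condition (no cross-talk \(U_j \to X_i\) for \(j\neq i\) except through observed parents) implies the graph-level consistency condition, not the other way around.

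Two smaller points. First, the technical obstacle you flag --- upgrading identically-vanishing partial derivatives to a genuine functional factorization --- is not where the difficulty lies; on a connected (e.g.\ convex) domain this is immediate, and the block-triangular structure from Lemma~\ref{parent} handles the ordering. Second, your final step (lifting a shared causal graph plus a component-wise invertible reparameterization of \(\mathbf{U}\) to agreement on all three tiers of the hierarchy via the identifiability result the paper cites) is the part that genuinely aligns with how the paper and its sources use this theorem, and it is fine to discharge it by citation. To repair the proposal, restate the hypothesis as the paper does, and then show that \(X_i = T_\theta(U_i \mid \mathbf{X}_{\mathbf{pa}_i})\) for all \(i\), together with mutual independence of the \(U_i\), forces the induced adjacency structure to satisfy \(\nabla_x T_\theta(X)\cdot(1-G)=0\); only then invoke identifiability for interventions and counterfactuals.
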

\noindent
The word "isolates" in the theorem means that
the \(i\)-th output \(X_i\) only depends on \(U_i\),
i.e. \(X_i = T_\theta(U_i)\).

Since the partial causal transformation is a special kind of transformation, it has two important lemmas which will be used later.
\begin{lemma}[Composition]
    \label{comp}
    On any label set \(\mathbf{L}\), we can chain different partial causal transformations to form a large one, i.e. \(T_{\theta}^\mathbf{L} = T_{\theta_k}^\mathbf{L} \circ \cdots \circ T_{\theta_1}^ \mathbf{L}\).
\end{lemma}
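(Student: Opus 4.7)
The plan is to prove this by induction on $k$, which reduces everything to the base case $k=2$: that $T_{\theta_2}^\mathbf{L} \circ T_{\theta_1}^\mathbf{L}$ is itself a partial causal transformation on $\mathbf{L}$. Once the two-map case is in hand, iterating it $k-1$ times collapses the entire chain into a single $T_{\theta'}^\mathbf{L}$ for a composite parameter $\theta'$, so no new ideas are needed beyond $k=2$.

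For the base case, I would set $\mathbf{Z} = T_{\theta_1}^\mathbf{L}(\mathbf{U})$ and $\mathbf{X} = T_{\theta_2}^\mathbf{L}(\mathbf{Z})$, then verify Equation~(\ref{eq:pcaf}) componentwise. The case $i \notin \mathbf{L}$ is immediate: both maps fix such coordinates by definition, so $X_i = Z_i = U_i$. The case $i \in \mathbf{L}$ expands to $X_i = T_{\theta_2}(Z_i \mid \mathbf{Z}_{\mathbf{pa}_i})$ with $Z_i = T_{\theta_1}(U_i \mid \mathbf{U}_{\mathbf{pa}_i})$. The whole argument hinges on the identity $\mathbf{Z}_{\mathbf{pa}_i} = \mathbf{U}_{\mathbf{pa}_i}$ for every $i \in \mathbf{L}$; once this holds, $X_i$ becomes a function of $U_i$ whose parameters depend only on $\mathbf{U}_{\mathbf{pa}_i}$, which is exactly the form required by Equation~(\ref{eq:pcaf}). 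The new parameter network is obtained by substituting $T_{\theta_1}(\cdot \mid \mathbf{U}_{\mathbf{pa}_i})$ inside the parameter network of $T_{\theta_2}$, giving the composite parameter $\theta'$ explicitly.

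The identity $\mathbf{Z}_{\mathbf{pa}_i} = \mathbf{U}_{\mathbf{pa}_i}$ is the main obstacle, and it is exactly where the structure of $\mathbf{L}$ must be invoked. In the CCNF construction each label set is a topological batch $\mathbf{B}_j$, and Lemma~\ref{parent} forces $\mathbf{pa}_i \cap \mathbf{B}_j = \emptyset$ for every $i \in \mathbf{B}_j$; the second branch of Equation~(\ref{eq:pcaf}) then yields $Z_j = U_j$ for every $j \in \mathbf{pa}_i$, discharging the identity. I would therefore either state this disjointness as an explicit hypothesis on $\mathbf{L}$ or appeal to Lemma~\ref{parent} in the intended topological-batch setting, since without such a condition a parent lying inside $\mathbf{L}$ would itself be rewritten by $T_{\theta_1}^\mathbf{L}$ and $X_i$ could acquire dependence on variables outside $\{U_i\} \cup \mathbf{U}_{\mathbf{pa}_i}$, breaking the defining form of a partial causal transformation.
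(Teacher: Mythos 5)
Your proof is correct, and it is worth noting that the paper itself states Lemma~\ref{comp} without supplying any proof, so there is no in-paper argument to compare against; your write-up fills a genuine gap. Your reduction to the two-map case and the componentwise check of Equation~\ref{eq:pcaf} is the natural argument: the \(i \notin \mathbf{L}\) branch is immediate, and for \(i \in \mathbf{L}\) everything indeed rests on \(\mathbf{Z}_{\mathbf{pa}_i} = \mathbf{U}_{\mathbf{pa}_i}\). Your most valuable contribution is spotting that this identity is \emph{not} free for an arbitrary label set: if some \(j \in \mathbf{pa}_i\) also lies in \(\mathbf{L}\), then \(Z_j = T_{\theta_1}(U_j \mid \mathbf{U}_{\mathbf{pa}_j})\) and the composite's parameters for \(X_i\) would pick up dependence on \(\mathbf{U}_{\mathbf{pa}_j} \not\subseteq \mathbf{U}_{\mathbf{pa}_i}\), so the composition would no longer have the defining form of a partial causal transformation. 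The lemma as stated (``on any label set'') is therefore too broad, and your fix --- either adding the hypothesis \(\mathbf{pa}_i \cap \mathbf{L} = \emptyset\) for all \(i \in \mathbf{L}\), or restricting to topological batches where Lemma~\ref{parent} guarantees it --- is exactly what is needed; the restricted version suffices for the only place the lemma is used, namely the proof of Theorem~\ref{universality} where \(\mathbf{L} = \mathbf{B}_j\). The one point you could make slightly more explicit is that the composite map \(U_i \mapsto T_{\theta_2}(T_{\theta_1}(U_i \mid \mathbf{U}_{\mathbf{pa}_i}) \mid \mathbf{U}_{\mathbf{pa}_i})\) remains invertible in \(U_i\) (as a composition of maps invertible in their first argument), which is required for the result to be a valid normalizing-flow transformation and not merely of the right conditional form.
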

\begin{lemma}[Partition]
    \label{part}
    For a large label set \(\mathbf{B}\) and its partition \((\mathbf{B}_1, \cdots, \mathbf{B}_n)\), 
    there always exists \(\theta, \theta_1, \cdots, \theta_n\) which can fulfill the equation: \(T_{\theta}^\mathbf{B} = T_{\theta_k}^{\mathbf{B}_n} \circ \cdots \circ T_{\theta_1}^{\mathbf{B}_1}\)
\end{lemma}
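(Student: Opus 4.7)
The plan is to prove the decomposition constructively, building $\theta_1, \ldots, \theta_n$ inductively along the topological order. I would first assume (consistent with how partitions arise in \sys via topological batching) that $(\mathbf{B}_1, \ldots, \mathbf{B}_n)$ respects the causal ordering of the underlying DAG, so that Lemma~\ref{parent} applies: for every $i \in \mathbf{B}_j$, all parents $\mathbf{pa}_i$ lie in $\bigcup_{k<j}\mathbf{B}_k$. Given a target $T_\theta^{\mathbf{B}}$, the goal is to realize $X_i = T_\theta(U_i \mid \mathbf{U}_{\mathbf{pa}_i})$ for every $i \in \mathbf{B}$ as a composition of partial causal transformations, one per batch.

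The core step is to track the intermediate state $\mathbf{V}^{j}$ obtained after applying $T_{\theta_j}^{\mathbf{B}_j} \circ \cdots \circ T_{\theta_1}^{\mathbf{B}_1}$ to $\mathbf{U}$, and to maintain two invariants by induction on $j$: (i) $V^{j}_i = X_i = T_\theta(U_i \mid \mathbf{U}_{\mathbf{pa}_i})$ for every $i \in \bigcup_{k \le j}\mathbf{B}_k$; and (ii) $V^{j}_i = U_i$ for every $i \notin \bigcup_{k \le j}\mathbf{B}_k$, since such an index has not been touched by any layer applied so far. The inductive step reduces to verifying layer $j$: for $i \in \mathbf{B}_j$, layer $j$ reads $V^{j-1}_i = U_i$ by invariant (ii) and conditions on $\mathbf{V}^{j-1}_{\mathbf{pa}_i}$, which by invariant (i) already equals the final $\mathbf{X}_{\mathbf{pa}_i}$.

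The construction thus reduces to picking $\theta_j$ such that $T_{\theta_j}(U_i \mid \mathbf{X}_{\mathbf{pa}_i}) = T_\theta(U_i \mid \mathbf{U}_{\mathbf{pa}_i})$ for all $i \in \mathbf{B}_j$. The key observation is that by the induction hypothesis $\mathbf{X}_{\mathbf{pa}_i}$ is an invertible deterministic function of $\mathbf{U}_{\mathbf{pa}_i}$ (each previous layer is an invertible flow by construction), so the conditioning change $\mathbf{U}_{\mathbf{pa}_i} \mapsto \mathbf{X}_{\mathbf{pa}_i}$ is a bijection. Composing this bijection with the original $\theta$ yields the required parameter $\theta_j$, closing the induction and delivering the decomposition $T_\theta^{\mathbf{B}} = T_{\theta_n}^{\mathbf{B}_n} \circ \cdots \circ T_{\theta_1}^{\mathbf{B}_1}$.

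The main obstacle I expect is justifying the final ``choose $\theta_j$'' step without circularity, since it implicitly requires the underlying flow family to be expressive enough to absorb an invertible reparametrization of its conditioning input. I would resolve this in two layers: at the function-valued level the construction is exact, because $\theta_j$ is defined pointwise via the bijection; at the parametric level I would invoke Theorem~\ref{universality}, which guarantees that multi-layer affine coupling flows such as MAF can realize the reparametrized map up to arbitrary precision. A subtler concern is the apparent mismatch between conditioning on $\mathbf{U}_{\mathbf{pa}_i}$ (in the definition of $T_\theta^{\mathbf{B}}$) and on $\mathbf{X}_{\mathbf{pa}_i}$ (which is what later layers actually see); invariant (i) is exactly what bridges the two, ensuring the composition carries the same information as a single partial causal transformation over all of $\mathbf{B}$.
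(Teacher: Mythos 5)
First, a point of reference: the paper does not actually prove Lemma~\ref{part}. It is quoted as a known property of partial causal transformations, with proofs deferred to the cited prior works, so your attempt has to be judged on its own merits rather than against an in-paper argument --- and it contains a genuine gap at its key step.

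The gap is the claim that \(\mathbf{X}_{\mathbf{pa}_i}\) is an invertible deterministic function of \(\mathbf{U}_{\mathbf{pa}_i}\). Under the definition in Equation~\ref{eq:pcaf}, each parent value is \(X_p = T_\theta(U_p \mid \mathbf{U}_{\mathbf{pa}_p})\), which depends on \(\mathbf{U}_{\mathbf{pa}_p}\) --- exogenous coordinates that in general lie \emph{outside} \(\mathbf{pa}_i\). So the map \(\mathbf{U}_{\mathbf{pa}_i} \mapsto \mathbf{X}_{\mathbf{pa}_i}\) is not even well-defined, let alone a bijection. The invertibility you invoke is that of the full map \(\mathbf{U} \mapsto \mathbf{V}^{j-1}\) on \(\mathbb{R}^d\); that does not restrict to a bijection on an arbitrary coordinate subset, only on ancestrally closed ones. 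The paper's own running example (Figure~\ref{fig:topo}) already breaks the claim: \(\mathbf{pa}_4 = \{1,3\}\), and \(X_3 = T_\theta(U_3 \mid U_1, U_2)\) depends on \(U_2 \notin \mathbf{U}_{\mathbf{pa}_4}\), so fixing \((X_1, X_3)\) does not fix \((U_1, U_3)\): varying \(U_2\) changes \(U_3\) while keeping \(X_3\) constant. Consequently, for a fixed conditioning value \(\mathbf{X}_{\mathbf{pa}_i}\), the left-hand side \(T_{\theta_j}(U_i \mid \mathbf{X}_{\mathbf{pa}_i})\) is a single fixed function of \(U_i\), while your target \(T_\theta(U_i \mid \mathbf{U}_{\mathbf{pa}_i})\) still varies with the hidden ancestors; the required \(\theta_j\) therefore need not exist, and the induction does not close.

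Your final paragraph correctly identifies the \(\mathbf{U}\)-versus-\(\mathbf{X}\) conditioning mismatch as the delicate point, but invariant (i) does not bridge it --- that mismatch is precisely why the two sides of the lemma are different function classes whenever the causal graph has depth greater than one. As stated, the lemma survives only in a vacuous reading (the quantifier is existential over \emph{all} parameters including \(\theta\), so taking every transformation to be the identity satisfies it), which is too weak to support the way it is used in the proof of Theorem~\ref{universality}; note also that the universality proof actually needs the converse direction of the one you attempt (given \(\theta_1, \ldots, \theta_n\), produce \(\theta\)), and that direction founders on the same mismatch. A correct non-vacuous version requires changing the definition: if the whole-set transformation is defined SCM-style in topological recursion, i.e.\ \(X_i = T_\theta(U_i \mid \mathbf{X}_{\mathbf{pa}_i})\) with transformed parent values as the conditioner, then the batch factorization becomes essentially immediate (your invariants (i) and (ii) then constitute the whole proof, with \(\theta_j\) simply the restriction of \(\theta\) to \(\mathbf{B}_j\)), and no bijection argument is needed.
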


With all the lemmas, we can prove the properties of \sys as below.
\begin{theorem}[Causality]
    Given a \sys \(T_{\theta}^\mathbf{B}\),
    for the \(i\)-th variable \(X_i\),
    \(X_i\) only depends on its parents \(\mathbf{X}_{\mathbf{pa}_i}\) and \(U_i\).
    Particularly assume \(i \in \mathbf{B}_j\), we have \(X_i = T_{\theta_j}^\mathbf{B_j}(U_i \mid \mathbf{X}_{\mathbf{pa}_i})\)
\end{theorem}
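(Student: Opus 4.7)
The plan is to unfold the definition of $T_\theta^\mathbf{B}$ as the composition $T_{\theta_n}^{\mathbf{B}_n} \circ \cdots \circ T_{\theta_1}^{\mathbf{B}_1}$ and to track the intermediate values $\mathbf{Z}^0, \mathbf{Z}^1, \ldots, \mathbf{Z}^n$ coordinate by coordinate. The key facts I would invoke are: (i) by Equation~\ref{eq:pcaf}, each $T_{\theta_k}^{\mathbf{B}_k}$ acts as the identity on coordinates outside $\mathbf{B}_k$; (ii) by Lemma~\ref{parent}, every parent label of $i \in \mathbf{B}_j$ lies in some $\mathbf{B}_{k}$ with $k < j$; and (iii) $(\mathbf{B}_1, \ldots, \mathbf{B}_n)$ is a partition, so each index belongs to exactly one batch.

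Fix $i \in \mathbf{B}_j$. First I would show $Z_i^{j-1} = U_i$ and $X_i = Z_i^{j}$ by a two-sided ``freezing'' argument: for every $k < j$ the index $i$ lies outside $\mathbf{B}_k$, so fact (i) gives $Z_i^{k} = Z_i^{k-1}$ and therefore $Z_i^{j-1} = Z_i^{0} = U_i$; symmetrically, for every $k > j$ we again have $Z_i^{k} = Z_i^{k-1}$, so $X_i = Z_i^{n} = Z_i^{j}$.

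Next, I would identify the conditioning variables that layer $j$ actually reads for coordinate $i$. For each parent label $p \in \mathbf{pa}_i$, Lemma~\ref{parent} gives $p \in \mathbf{B}_{k_p}$ with $k_p < j$. Applying the same freezing argument to coordinate $p$ — it is untouched before $\mathbf{B}_{k_p}$, it is assigned its final transformed value at layer $k_p$, and it is left alone by every later batch because the batches partition the labels — yields $Z_p^{k_p} = Z_p^{k_p+1} = \cdots = Z_p^{n} = X_p$. Since $k_p < j$, this gives $Z_p^{j-1} = X_p$ for every $p \in \mathbf{pa}_i$, i.e.\ $\mathbf{Z}_{\mathbf{pa}_i}^{j-1} = \mathbf{X}_{\mathbf{pa}_i}$.

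Combining the two steps, at layer $j$ we obtain $Z_i^{j} = T_{\theta_j}^{\mathbf{B}_j}(Z_i^{j-1} \mid \mathbf{Z}_{\mathbf{pa}_i}^{j-1}) = T_{\theta_j}^{\mathbf{B}_j}(U_i \mid \mathbf{X}_{\mathbf{pa}_i})$, and the identity $X_i = Z_i^{j}$ established above finishes the claim. The only subtlety I anticipate is notational: Equation~\ref{eq:pcaf} writes a partial causal transformation with the conditioner drawn from the exogenous $\mathbf{U}$, so one must carefully reinterpret that conditioner as $\mathbf{Z}_{\mathbf{pa}_i}^{j-1}$ when the transformation is applied mid-chain, and then use the freezing argument on the parent coordinates to rewrite it as $\mathbf{X}_{\mathbf{pa}_i}$. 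Beyond this bookkeeping, the proof is a direct unrolling of the composition together with the partition property of topological batching, and requires no analytic content.
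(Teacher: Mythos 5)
Your proposal is correct and follows essentially the same route as the paper's own proof: both unroll the composition $T_{\theta_n}^{\mathbf{B}_n}\circ\cdots\circ T_{\theta_1}^{\mathbf{B}_1}$, use the partition property and Equation~\ref{eq:pcaf} to freeze coordinate $i$ outside layer $j$, and invoke Lemma~\ref{parent} to show $\mathbf{Z}^{j-1}_{\mathbf{pa}_i}=\mathbf{X}_{\mathbf{pa}_i}$. Your explicit handling of the conditioner's reinterpretation mid-chain is a point the paper glosses over, but the substance is identical.
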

\begin{proof}
    As \(\mathbf{B}\) is a partition of the label set, 
    for any label \(i\), it belongs to one and only one \(\mathbf{B}_j\).
    Therefore for any \(k \neq j\), \(T_{\theta_k}^\mathbf{B_k}\) performs identity transformation on the \(i\)-th variable
    by Equation~\ref{eq:pcaf}.
    i.e. we have 
    \begin{gather}
        U_i = Z^{0}_{i} = Z^{1}_{i} \cdots = Z^{j - 1}_{i} \\
        Z^{j}_{i} = Z^{j + 1}_{i} \cdots = Z^{n}_{i} = X_i\label{eq:two} \\
        X_i = Z^{j}_{i} = T_{\theta_j}^{\mathbf{B}_j}(Z^{j - 1}_{i} \mid \mathbf{Z}^{j - 1}_{\mathbf{pa}_i}) = T_{\theta_j}^{\mathbf{B}_j}(U_i \mid \mathbf{Z}^{j - 1}_{\mathbf{pa}_i}) \label{eq:three}
    \end{gather}

    From Equation~\ref{eq:two},
    we can infer that for any \(k < j\), we have \(\mathbf{Z}^{k}_{\mathbf{B}_k} = \cdots = \mathbf{Z}^{j - 1}_{\mathbf{B}_k} = \cdots = \mathbf{X}_{\mathbf{B}_k}\),
    and as shown in Lemma~\ref{parent}, \(\mathbf{pa}_i \subset \mathop{\cup}\limits_{k = 1}^{j - 1}\mathbf{B}_k\).
    Together we have \(\mathbf{Z}^{j - 1}_{\mathbf{pa}_i} = \mathbf{X}_{\mathbf{pa}_i}\).
    Putting this equation into Equation~\ref{eq:three}, we can infer that \(X_i = T_{\theta_j}^\mathbf{B_j}(U_i \mid \mathbf{X}_{\mathbf{pa}_i})\).
\end{proof}

\begin{theorem}[Universality]
    A \sys \(T_{\theta}^\mathbf{B}\) is a multi-layer universal approximator as long as for any \(j\), \(T_{\theta_j}^{\mathbf{B}_j}\) is a multi-layer universal approximator.
\end{theorem}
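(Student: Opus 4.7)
The plan is to reduce the multi-layer universality of the composed CCNF $T_\theta^{\mathbf{B}}$ to the multi-layer universality of an autoregressive causal normalizing flow, exploiting the batchwise structure of the CCNF together with Lemmas~\ref{comp} and~\ref{part}. The first step is to work inside a single topological batch $\mathbf{B}_j$. Because each $T_{\theta_j}^{\mathbf{B}_j}$ is multi-layer universal by hypothesis, Lemma~\ref{comp} lets me chain $k_j$ copies to form an enlarged partial transformation $\widehat{T}^{\mathbf{B}_j} = T_{\theta_j^{(k_j)}}^{\mathbf{B}_j} \circ \cdots \circ T_{\theta_j^{(1)}}^{\mathbf{B}_j}$ that can approximate any target partial transformation supported on $\mathbf{B}_j$ to arbitrary accuracy.

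Next I would assemble these batchwise stacks in the topological order to form $\widehat{T}^{\mathbf{B}} = \widehat{T}^{\mathbf{B}_n} \circ \cdots \circ \widehat{T}^{\mathbf{B}_1}$, which is still a CCNF with partition $\mathbf{B}$, just with more inner layers. Lemma~\ref{part} (Partition), read in the direction I need, says that every partial causal transformation on the full label set $\{1,\ldots,d\}$ admits such a decomposition along the topological partition. Since a partial transformation over the whole label set is precisely an autoregressive causal transformation whose conditioner respects the DAG, the closure of the CCNF family contains every such autoregressive flow. Invoking the background result that causal autoregressive normalizing flows (such as parent-conditioned affine coupling flows) are multi-layer universal for distributions induced by the SCM then yields the claim. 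Theorem~\ref{causality} guarantees along the way that the constructed map still has the correct causal structure $X_i = T_{\theta_j}^{\mathbf{B}_j}(U_i \mid \mathbf{X}_{\mathbf{pa}_i})$, so the approximating sequence stays inside the relevant causal hypothesis class.

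The step I expect to be the main obstacle is controlling the approximation error \emph{uniformly} through the composition. Lemma~\ref{part} produces an exact decomposition of any target, but my construction replaces each target $T_{\theta_j}^{\mathbf{B}_j}$ by a finite stack that only approximates it, so I must show that the per-batch errors do not blow up when composed across $n$ batches. I would handle this by noting that the composition is continuous in its component maps on any compact input set, giving at worst multiplicative propagation of error; shrinking the per-batch budget (by enlarging $k_j$) then drives the total error below any prescribed tolerance and completes the argument.
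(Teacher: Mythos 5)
Your proposal follows essentially the same route as the paper's proof: stack layers within each batch and absorb them into a single partial causal transformation via Lemma~\ref{comp}, then reassemble the batchwise pieces along the topological partition via Lemma~\ref{part} to conclude that \(T_{\theta}^{\mathbf{B}}\) can approximate any target. Your detour through the universality of DAG-respecting autoregressive flows is logically unnecessary (the hypothesis already grants per-batch universality, which is all the paper uses), but it is harmless. The one place you go beyond the paper is your final paragraph on controlling the approximation error through the composition of \(n\) batches: the paper's proof passes over this silently, treating the per-batch approximations as if they were exact, so your observation that the errors propagate at worst multiplicatively on compact sets (and can be driven down by enlarging each per-batch stack) patches a step the paper leaves implicit rather than diverging from it.
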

\begin{proof}
    Since we assume \(T_{\theta_j}^\mathbf{B_j}\) is a multi-layer universal approximator, the distribution \(P(\mathbf{X_{B_j}})\) can be approximated by a finite number of transformations chaining together: \(P(\mathbf{X_{B_j}}) = P(T_{\theta_{j_k}}^\mathbf{B_j} \cdots \circ T_{\theta_{j_1}}^\mathbf{B_j}(\mathbf{U_{B_j}}))\).
    Given Lemma~\ref{comp}, the chaining transformation can be expressed as \(T_{\theta_{j}}^\mathbf{B_j} = T_{\theta_{j_k}}^\mathbf{B_j} \cdots \circ T_{\theta_{j_1}}^\mathbf{B_j}\).
    Given Lemma~\ref{part}, we can chain all \(T_{\theta_j}^{\mathbf{B}_j}\) together to form \(T_{\theta}^\mathbf{B}\): \(T_{\theta}^\mathbf{B} = T_{\theta_n}^{\mathbf{B}_n} \circ \cdots \circ T_{\theta_1}^{\mathbf{B}_1}\).
    The \(T_{\theta}^\mathbf{B}\) is a multi-layer universal approximator
    since \(T_{\theta}^\mathbf{B}\) can approximate any \(P(\mathbf{X})\).
\end{proof}

\begin{theorem}[Causal Consistency]
    \(T_{\theta}^\mathbf{B}\) is causally consistent with the given SCM \(\mathcal{M}\).
\end{theorem}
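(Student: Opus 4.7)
The plan is to derive causal consistency directly from the Causality theorem together with Theorem~\ref{cc}. The Causality theorem already establishes that for every index $i$, if $i \in \mathbf{B}_j$ then $X_i = T_{\theta_j}^{\mathbf{B}_j}(U_i \mid \mathbf{X}_{\mathbf{pa}_i})$. This is precisely the structural form of the SCM equation $X_i = \widetilde{f_i}(\mathbf{X}_{\mathbf{pa}_i}, U_i)$, so the heavy lifting is already done; the remaining work is to cast it in the form required by Theorem~\ref{cc}.

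First, I would verify the isolation property: for any $i$, $X_i$ depends on exactly one exogenous variable, namely $U_i$, and on no other $U_k$ with $k \neq i$. This follows because the Causality theorem expresses $X_i$ as a function whose only exogenous input is $U_i$; the parent inputs $\mathbf{X}_{\mathbf{pa}_i}$ are endogenous variables of $T_\theta^{\mathbf{B}}$, not exogenous ones. Hence the exogenous variables are isolated in the sense required by Theorem~\ref{cc}.

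Second, I would argue that the induced causal graph $\mathcal{G}_{T_\theta^\mathbf{B}}$ agrees with the causal graph $\mathcal{G}_\mathcal{M}$ of the SCM. Since $X_i$ is a function only of $\mathbf{X}_{\mathbf{pa}_i}$ and $U_i$, the only edges into $X_i$ come from $\mathbf{X}_{\mathbf{pa}_i}$, which is exactly the parent set prescribed by $\mathcal{G}_\mathcal{M}$. Thus the flow does not introduce spurious edges and does not omit any prescribed edges, so the two causal graphs coincide. Applying Theorem~\ref{cc} then immediately yields causal consistency of $T_\theta^{\mathbf{B}}$ with $\mathcal{M}$.

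The main obstacle, such as it is, is purely presentational: one must be careful that the dependence of $X_i$ on $\mathbf{X}_{\mathbf{pa}_i}$ is recognized as an \emph{endogenous} dependence consistent with $\mathcal{G}_\mathcal{M}$, and does not violate the "isolates the exogenous variables" hypothesis of Theorem~\ref{cc}. Once this distinction between endogenous parent dependence (which is desired) and exogenous cross-dependence (which would violate isolation) is made explicit, the proof reduces to a one-line invocation of Theorem~\ref{cc} applied to the conclusion of the Causality theorem.
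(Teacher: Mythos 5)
Your proposal is correct and follows essentially the same route as the paper: both use the Causality theorem to establish that each $X_i$ depends on the single exogenous variable $U_i$ (with the parent dependence absorbed as an endogenous/parameter dependence, so the ``isolation'' hypothesis holds), and then conclude by invoking Theorem~\ref{cc}. Your explicit remark distinguishing endogenous parent dependence from exogenous cross-dependence is the same point the paper makes by noting that $\theta_j$ is fixed once $\mathbf{X}_{\mathbf{pa}_i}$ is fixed.
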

\begin{proof}
    As shown in Theorem~\ref{causality},
    for any \(X_i\) and \(i \in \mathbf{B}_j\), \(X_i = T^\mathbf{B_j}(U_i)\).
    Here we omit the parameter \(\theta_j\) since it is fixed as long as the parent set \(\mathbf{X}_{\mathbf{pa}_i}\) is fixed.
    It indicates that \(X_i\) only depends on the value of \(U_i\).
    Given Theorem~\ref{cc}, we know that \(T_{\theta}^\mathbf{B}\) is causally consistent with \(\mathcal{M}\).
\end{proof}

\begin{theorem}[Minimum Layer]
    If the longest path of the DAG causal graph \(\mathcal{G}\) is \(d\), then \sys contains at least \(d\) layers. 
\end{theorem}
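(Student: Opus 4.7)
The plan is to exploit Lemma~\ref{parent}, which says that in the topological batching $\mathbf{B}=(\mathbf{B}_1,\dots,\mathbf{B}_n)$ every parent of a node in $\mathbf{B}_j$ must lie in some $\mathbf{B}_k$ with $k<j$. Since the CCNF is built as $T_\theta^{\mathbf{B}}=T_{\theta_n}^{\mathbf{B}_n}\circ\cdots\circ T_{\theta_1}^{\mathbf{B}_1}$, each batch contributes at least one partial causal transformation (i.e.\ one layer), so it suffices to show that the number of batches satisfies $n\ge d$.

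The key step is a short induction along a longest path. First I would fix a longest path $V_{p_1}\to V_{p_2}\to\cdots\to V_{p_d}$ in $\mathcal{G}$ and let $j_k$ denote the batch index of $V_{p_k}$, so $p_k\in\mathbf{B}_{j_k}$. Since $V_{p_{k-1}}$ is a parent of $V_{p_k}$, Lemma~\ref{parent} gives $j_{k-1}<j_k$. Chaining these strict inequalities yields $1\le j_1<j_2<\cdots<j_d$, so we need at least $d$ distinct batch indices, i.e.\ $n\ge d$.

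Finally I would tie this back to the model: the CCNF transformation is a composition of $n$ partial causal transformations, one for each batch, and its depth is therefore at least $n$. Combining this with $n\ge d$ gives that the CCNF contains at least $d$ layers, which is the claim.

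The argument is essentially a one-line consequence of Lemma~\ref{parent}, so the main obstacle is not mathematical depth but rather bookkeeping: being explicit about whether ``length of a path'' counts nodes or edges (here we interpret it as the number of nodes along the path, consistent with the bound $n\ge d$), and noting that each $T_{\theta_j}^{\mathbf{B}_j}$ counts for at least one layer even though, as in the running example of Figure~\ref{fig:example}, it may itself be composed of several. Nothing else in the construction can reduce the depth below $n$, since the batches partition the label set and no $\mathbf{B}_j$ can be merged with another without violating the parent-precedes-child ordering used in Theorem~\ref{causality}.
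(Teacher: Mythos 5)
Your proposal is correct and follows essentially the same route as the paper's proof: fix a longest path, observe that its nodes must occupy distinct batches, conclude $n\ge d$, and note that each $T_{\theta_j}^{\mathbf{B}_j}$ contributes at least one layer. If anything, your explicit derivation of the strictly increasing batch indices $j_1<\cdots<j_d$ from Lemma~\ref{parent} is more careful than the paper's bare assertion that $V_i\in\mathbf{B}_i$, which as stated need not hold literally (only the strict ordering of batch indices is guaranteed).
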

\begin{proof}
    Suppose that the longest path of the \(\mathcal{G}\) is \((V_1, \cdots, V_d)\), and we know that each node must belong to different \(\mathbf{B}_i\), i.e. \(V_i \in \mathbf{B}_i\).
    Therefore the partition \(\mathbf{B}\) contains at least \(d\) sets.
    Since each \(T_{\theta_i}^{\mathbf{B}_i}\) contains at least one layer, overall \sys contains at least \(d\) layers.
\end{proof}

As proved by prior works~\cite{vaca, causalnf}, if a generative model attempts to capture the complete causality, it must contain precisely one layer or a minimum of \(d\) layers.
However those models have no limitation on the number of layers,
therefore, unlike \sys, they cannot inherently ensure the completeness of the captured causality.
\tikzset{
  every matrix/.style={
            matrix of nodes, 
            nodes = {
                circle,
                minimum size = 0.5cm
            }, 
            row sep = 0.25cm,
            column sep = 0.5cm
            }
}

\section{Synthetic Dataset}
\label{appendix:synthetic}
In this section, we list all the SCMs used for creating synthetic datasets in this paper.
Any variable $U_i$ is sampled from $N(0, 1)$.

\PP{Non-linear Triangle}
The \textit{Non-linear Triangle} dataset consists of 3 endogenous variables.
The structural equations and the causal graph are list below. 
\begin{align*}
    X_0 &= U_0 + 1 \\
    X_1 &= 2 * X_{0}^{2} - Softsign(X_0 + U_1) \\
    X_2 &= 20 / (1 + e^{-X_{1}^{2} + X_0}) + U_2
\end{align*}

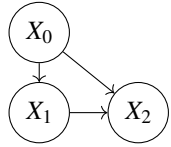
\begin{figure}[h]
    \centering
    \caption{Causal graph for Non-linear Triangle dataset}
    \begin{tikzpicture} 
        \matrix(m) {
            |[draw]| $X_0$ & ~ \\
            |[draw]| $X_1$ & |[draw]| $X_2$ \\
        };

        \draw[->] (m-1-1) -- (m-2-1);
        \draw[->] (m-1-1) -- (m-2-2);
        \draw[->] (m-2-1) -- (m-2-2);
    \end{tikzpicture}
\end{figure}

\PP{Non-linear Simpson}
The \textit{Non-linear Simpson} dataset consists of 4 endogenous variables, which simulates the Simpson's paradox.
The structural equations and the causal graph are list below. 
\begin{align*}
    X_0 &= U_0 \\
    X_1 &= Softsign(X_0) + \sqrt{3 / 20} * U_1 \\
    X_2 &= Softsign(2 * X_1) + 1.5 * X_0 - 1 +Softsign(U_2)\\
    X_3 &= (X_2 - 4) / 5 + 3 + 1 / \sqrt{10} * U_3
\end{align*}

\begin{figure}[h]
    \centering
    \caption{Causal graph for Non-linear Simpson dataset}
    \begin{tikzpicture} 
    
        \matrix(m) {
            |[draw]| $X_0$ & ~ & ~ \\
            ~ & |[draw]| $X_2$ & |[draw]| $X_3$ \\
            |[draw]| $X_1$ & ~ & ~ \\
        };

        \draw[->] (m-1-1) -- (m-3-1);
        \draw[->] (m-1-1) -- (m-2-2);
        \draw[->] (m-3-1) -- (m-2-2);
        \draw[->] (m-2-2) -- (m-2-3);
    \end{tikzpicture}
\end{figure}
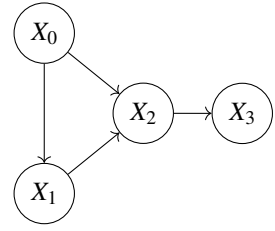

\PP{Four Node Chain}
The \textit{Four Node Chain} dataset consists of 4 endogenous variables.
This dataset is used in the causal consistency experiment.
The structural equations and the causal graph are list below. 
\begin{align*}
    X_0 &= U_0 \\
    X_1 &= 5 * X_0 - U_1 \\
    X_2 &= -0.5 * X_1 - 1.5 * U_2\\
    X_3 &= X_2 + U_3
\end{align*}

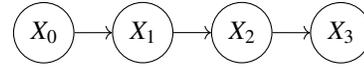
\begin{figure}[h]
    \centering
    \caption{Causal graph for Four Node Chain dataset}
    \begin{tikzpicture}
        \matrix (m) {
            |[draw]| $X_0$ & |[draw]| $X_1$ & |[draw]| $X_2$ & |[draw]| $X_3$ \\
        };
    
        \draw[->] (m-1-1) -- (m-1-2);
        \draw[->] (m-1-2) -- (m-1-3);
        \draw[->] (m-1-3) -- (m-1-4);
    \end{tikzpicture}
\end{figure}

\PP{M-Graph}
The \textit{M-Graph} dataset consists of 5 endogenous variables.
The structural equations and the causal graph are list below. 
\begin{align*}
    X_0 &= U_0 \\
    X_1 &= U_1 \\
    X_2 &= Softsign(e^{X_0} + U_2) \\
    X_3 &= (X_{1}^{2} + 0.5 * X_{0}^{2} + 1) * U_3 \\
    X_4 &= (-1.5 * X_{1}^{2} - 1) * U_4
\end{align*}

\begin{figure}[h]
    \centering
    \caption{Causal graph for Four M-Graph dataset}
    \begin{tikzpicture}
        \matrix (m) {
            ~ & |[draw]| $X_0$ & ~ & |[draw]| $X_1$ & ~ \\
            |[draw]| $X_2$ & ~ & |[draw]| $X_3$ & ~ & |[draw]| $X_4$ \\
        };
    
        \draw[->] (m-1-2) -- (m-2-1);
        \draw[->] (m-1-2) -- (m-2-3);
        \draw[->] (m-1-4) -- (m-2-3);
        \draw[->] (m-1-4) -- (m-2-5);
    \end{tikzpicture}
\end{figure}
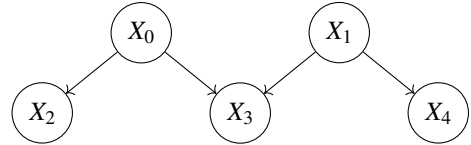

\PP{Network}
The \textit{Network} dataset consists of 6 endogenous variables.
It simulates a 2-layer neuron network with softplus activate layer.
The structural equations and the causal graph are list below. 
\begin{align*}
    X_0 &= U_0 \\
    X_1 &= Softplus(2 * X_0 + U_1 + 1)\\
    X_2 &= Softplus(-1.5 * X_0 + U_2 - 1) \\
    X_3 &= Softplus(2 * X_1 - 2.5 * X_2 + U_3) \\
    X_4 &= Softplus(0.5 * X_1 - 3.5 * X_2 + U_4) \\
    X_5 &= 10 * (Sigmoid(0.5 * X_3 - 1.5 * X_4 + U_5) - 0.5)
\end{align*}

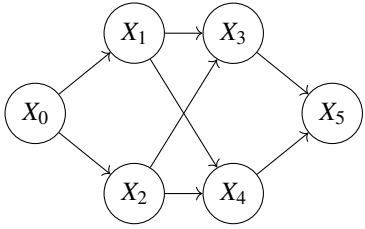
\begin{figure}[h]
    \centering
    \caption{Causal graph for Network dataset}
    \begin{tikzpicture}
        \matrix (m) {
            ~ & |[draw]| $X_1$ & |[draw]| $X_3$ & ~ \\
            |[draw]| $X_0$ & ~ & ~ & |[draw]| $X_5$ \\
            ~ & |[draw]| $X_2$ & |[draw]| $X_4$ & ~ \\
        };
    
        \draw[->] (m-2-1) -- (m-1-2);
        \draw[->] (m-2-1) -- (m-3-2);
        \draw[->] (m-1-2) -- (m-1-3);
        \draw[->] (m-1-2) -- (m-3-3);
        \draw[->] (m-3-2) -- (m-1-3);
        \draw[->] (m-3-2) -- (m-3-3);
        \draw[->] (m-1-3) -- (m-2-4);
        \draw[->] (m-3-3) -- (m-2-4);
    \end{tikzpicture}
\end{figure}

\PP{Backdoor}
The \textit{Backdoor} dataset consists of 7 endogenous variables.
It simulates a backdoor variable $X_0$ which controls the variable $X_6$ through two different paths.
The structural equations and the causal graph are list below. 
\begin{align*}
        X_0 &= U_0 \\
        X_1 &= Softplus(X_0 + U_1) \\
        X_2 &= Softplus(X_1 + U_2 + 2) \\
        X_3 &= Softplus(-X_0 + U_3) \\
        X_4 &= Softplus(X_2 + U_4 - 1) \\
        X_5 &= Softplus(X_3 + U_5 + 1) \\
        X_6 &= 10 * (Sigmoid(X_4 / 3 - X_5 / 3 +  U_6) - 0.5)
\end{align*}

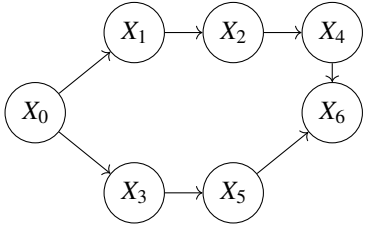
\begin{figure}[h]
    \centering
    \caption{Causal graph for Backdoor dataset}
    \begin{tikzpicture}
        \matrix (m) {
            ~              & |[draw]| $X_1$ & |[draw]| $X_2$ & |[draw]| $X_4$ \\
            |[draw]| $X_0$ & ~              & ~              & |[draw]| $X_6$ \\
            ~              & |[draw]| $X_3$ & |[draw]| $X_5$ & ~              \\
        };
    
        \draw[->] (m-2-1) -- (m-1-2);
        \draw[->] (m-2-1) -- (m-3-2);
        \draw[->] (m-1-2) -- (m-1-3);
        \draw[->] (m-1-3) -- (m-1-4);
        \draw[->] (m-1-4) -- (m-2-4);
        \draw[->] (m-3-2) -- (m-3-3);
        \draw[->] (m-3-3) -- (m-2-4);
    \end{tikzpicture}
\end{figure}

\PP{Eight Node Chain}
The \textit{Eight Node Chain} dataset consists of 8 endogenous variables.
This dataset is used in the causal inference tasks experiment.
The structural equations and the causal graph are list below. 

\begin{align*}
        X_0 &= U_0 \\
        X_1 &= e^{Softsign(x0 + U_1)} \\
        X_2 &= X_1 + Softsign(e^{X_1} * U_2) \\
        X_3 &= X_2 + Softsign(e^{X_2} * U_3) \\
        X_4 &= X_3 + Softsign(e^{X_3} * U_4) \\
        X_5 &= X_4 + Softsign(e^{X_4} * U_5) \\
        X_6 &= X_5 + Softsign(e^{X_5} * U_6) \\
        X_7 &= X_6 + Softsign(e^{X_6} * U_7)
\end{align*}

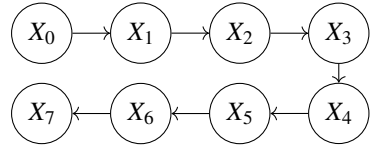
\begin{figure}[h]
    \centering
    \caption{Causal graph for Eight Node Chain dataset}
    \begin{tikzpicture}
        \matrix (m) {
            |[draw]| $X_0$ & |[draw]| $X_1$ & |[draw]| $X_2$ & |[draw]| $X_3$ \\
            |[draw]| $X_7$ & |[draw]| $X_6$ & |[draw]| $X_5$ & |[draw]| $X_4$ \\
        };
    
        \draw[->] (m-1-1) -- (m-1-2);
        \draw[->] (m-1-2) -- (m-1-3);
        \draw[->] (m-1-3) -- (m-1-4);
        \draw[->] (m-1-4) -- (m-2-4);
        \draw[->] (m-2-4) -- (m-2-3);
        \draw[->] (m-2-3) -- (m-2-2);
        \draw[->] (m-2-2) -- (m-2-1);
    \end{tikzpicture}
\end{figure}

\section{Evaluation}
\label{appendix:eval}
Here are the technical details of the experiments in Section~\ref{s:eval}.

\begin{figure*}[h]
    \centering
    \subfloat[Intervened distributions of $X_1$]{
        \includegraphics[width=2.5in]{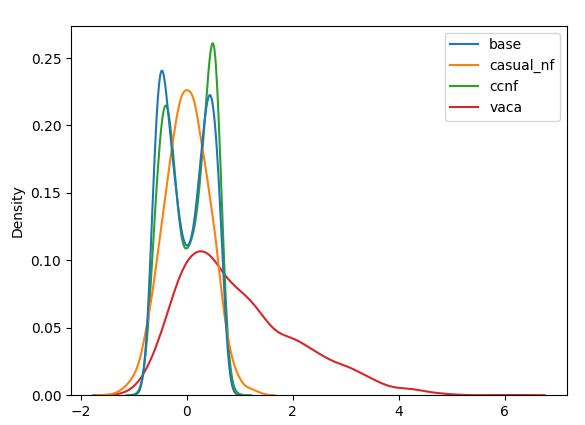}
         \label{fig:intervene-1}
    }
    \subfloat[Intervened distributions of $X_2$]{
        \includegraphics[width=2.5in]{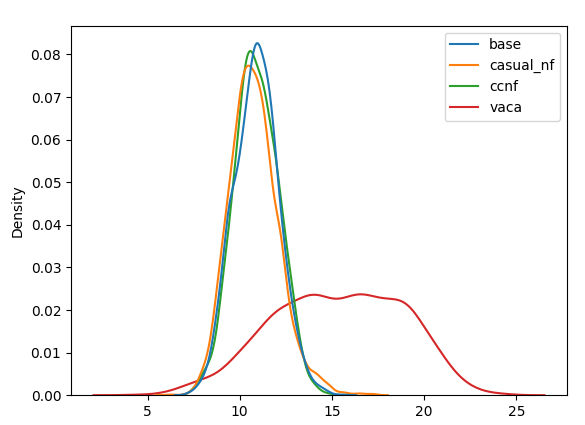}
        \label{fig:intervene-2}
    }
    \caption{Intervened distributions of $Nlin-Triangle$ on \sys, VACA, causalNF.}
    \label{fig:intervene}
\end{figure*}

\PP{Setup}
All experiments are conducted on an 8-core AMD Ryzen 7 6800H CPU, 16G RAM, and Geforce RTX 3070 Ti GPU. 
Every experiment adopts PyTorch 2.1.0+cu121 and CUDA 12.0 as the base framework.

\subsection{Causal Consistency}

\PP{Experiment design}
For this experiment, 
we utilize a dataset of i.i.d observations generated from a four-line-chain SCM,
details of which are presented in Appendix~\ref{appendix:synthetic}.
The length of the longest path, 
denoted as \(d\) in Section~\ref{s:operations},
is 4.
As the SCM exhibits a sparse Jacobian matrix, 
any causal inconsistency is readily apparent,
as highlighted in previous studies~\cite{causalnf}.

In our evaluation, all models except \sys are categorized into two types: a model with one layer ($L = 1$) and a model with more than one layer ($L > 1$).
For VACA, $L$ refers to the number of layers in the encoder architecture.
For the decoder architecture of VACA, the number of layers is 4
to maintain causal consistency with the SCM.
\sys is impossible to be implemented with one layer ($L = 1$) since \sys requires at least 4 layers in the four-line-chain dataset according to Theorem~\ref{mini},
therefore we only implement \sys with more than one layer ($L > 1$) for this experiment.

\PP{Hyperparameters selection}
The selection ranges for hyperparameters 
are derived from the own and related works
of those models,
ensuring relevance and comparability. 
To ensure a fair comparison, 
we adjust certain hyperparameter selections
so that all models are evaluated
under similar conditions.
We employ the Optuna framework~\cite{optuna} to 
systematically cross-validate and 
identify the optimal hyperparameter settings
for each model 
involved in our experiment.

\emph{1) CAREFL}: we adopt the MAF with ReLU~\cite{relu} activation layers.
The hidden layers of each MAF are chosen from the following combinations: \{[8, 8], [16, 16], [32, 32], [64, 64]\}.

\emph{2) VACA}: the GNN architecture is chosen from \{PNA, GIN, GAT\}.
The number of hidden channels inside each GNN is chosen from \{16, 32\}.
The number of layers in the encoder is fixed at L, and the number in the decoder equals \(max(4, d)\).
The number of layers in the MLP before the GNN is chosen from \{0, 1\}.

\emph{3) CausalNF}: the structure of NFs is chosen from \{MAF, flow++~\cite{maf}\} with the ReLU activation layer.
The number of hidden layers of each NF is chosen from \{[8, 8], [16, 16], [32, 32], [64, 64]\}.

\emph{4) \sys}: we also choose the NFs from \{MAF, flow++\} with the ReLU activation layer.
The number of hidden layers is also chosen from the same combinations as others: \{[8, 8], [16, 16], [32, 32], [64, 64]\}.
However, unlike others, the length of each \(T_{\theta}^{\mathbf{B}}\) in \sys varies from 1 to 5.

\PP{Training methodology}
The synthetic dataset is generated through the following process:
We initially generate 30000 samples using the underlying SCM,
and randomly choose 25000 samples for training, 2500 for evaluation, and 2500 for testing.
For training the models, we establish consistent parameters and methodologies across all experiments.
Every model is trained with a maximum of 1000 epochs, and training halts when the validation loss has not decreased for 50 consecutive epochs.
We utilize Adam~\cite{adam} with a learning rate set to 0.001.
To mitigate randomness, every experiment is conducted five times and the average results and the maximum deviation from the average are recorded for analysis.

Note that in the subsequent experiment on causal inference tasks,
we maintain the same hyperparameters and training methodology
as in this experiment.
This is reasonable because
the models trained in the first experiment
can be used directly in the second one.

\subsection{Causal Inference Tasks}

\PP{Measurement}
According to~\cite{pearl}, there exists three types of causality and each corresponds to different causal inference tasks: \textit{observations},
\textit{interventions}, and
\textit{counterfactuals}.
Therefore we use three different measurements to evaluate the performance of causal inference tasks.

\emph{1) For observations}, the measurement is the KL distance between prior distributions captured by generative models and the normal distribution.

\emph{2) For interventions}, we measure the max \textit{Maximum Mean Discrepancy} (MMD) distance between the intervened distributions given by generative models and the actual intervened distributions.

\emph{3) For counterfactuals}, we measure the \textit{Root-Mean-Square Deviation} (RMSD) distance between the actual counterfactuals and the generated counterfactuals.

\PP{Case Study}
To make a more intuitive comparison, we conduct a case study on the nlin-triangle dataset, whose underlying generative model is a three-node-triangle SCM.
Equations are listed here and in Appendix~\ref{appendix:synthetic}.
\begin{align*}
    X_0 &= U_0 + 1 \\
    X_1 &= 2 * X_{0}^{2} - Softsign(X_0 + U_1) \\
    X_2 &= 20 / (1 + e^{-X_{1}^{2} + X_0}) + U_2
\end{align*}
here, the equation between $X_1$ and $U_1$ is non-linear, while the one between $X_2$ and $U_2$ is linear.
In this study, we perform $Do(X_0 = 0)$ and record the distribution of $X_1$ and $X_2$ on each model.
Results are shown in Figure~\ref{fig:intervene}.
As seen from the graph, compared with the actual distributions (blue),
\sys (green) can successfully simulate both distributions of $X_1$ and $X_2$,
CausalNF (orange) can only simulate the distribution of $X_2$ and
VACA (red) fails to capture both distributions.

These outcomes corroborate the theoretical expectations Figure~\ref{s:operations}.
Since \sys is a multi-layer universal approximator, it possesses the capability to capture the true generative process regardless of its complexity,
hence performing well on both intervened distributions.
CausalNF limits its depth for maintaining causal consistency,
resulting in the inability to capture complex relationships like $X_1$.
VACA, leveraging GNN for encoding and decoding the SCM, therefore is susceptible to over-parameterization issues.
This problem eventually leads VACA to deviations from the actual distributions.

\subsection{Real-world Evaluation}

\PP{Hyperparameter tuning and training method}
As we only train the German credit database on \sys.
We manually explore a range of hyperparameters to identify the best ones.
We opt for flow++ for all \(T_{\theta}^{\mathbf{B}}\) instances.
Regarding the number of hidden layers in each NF, we set it to 3 for all layers except 5 for the last one.
Similar to the prior experiment, we train \sys by Adam optimization in 1000 epochs with a learning rate of 0.01.
We repeat the training process five times and record the average result and the maximum deviation from the average.

\PP{ITE and ATE Evaluation}
ITE and ATE are two methods to evaluate 
the result of causal inference in semi-supervised datasets.
ITE stands for the difference in outcomes 
for an individual $i$ under treatment $T$:
\begin{equation*}
    ITE_i = Y_{i}(T = 1) - Y_{i}(T = 0)
\end{equation*}
while ATE stands for the average of the
ITE across all individuals:
\begin{equation*}
    ATE = \sum_{i=1}^{n} ITE_i
\end{equation*}
In this experiment, we use \sys to measure 
the ATE value of $age$ and $gender$ in Table \ref{tab:ate}:

\begin{table}[h]
    \centering
    \begin{tabular}{lll}
\toprule

Name &
$ATE_{gender}$ &
$ATE_{age}$ \\
\midrule

CCNF &
0.00$\pm$0.00 &
0.00$\pm$0.00 \\
\bottomrule
\end{tabular}
    \caption{ATE evaluation on the German credit dataset}
    \label{tab:ate}
\end{table}

We notice that $ATE_{gender}$ and
$ATE_{age}$ are zero,
meaning that both values do not have
any effect on the $risk$.
The result is not surprising: 
since the $age$ and $gender$ are not directly used 
when computing the $risk$,
changing the value directly 
cannot affect the value of $risk$.

\end{document}